\definecolor{transientcolour}{HTML}{FDB196}
\definecolor{convergecolour}{HTML}{FBEAC8}
\definecolor{recurrentcolour}{HTML}{93C2C6}
\def\shrinkingintervals{{\color{OliveGreen}shrinking intervals}\xspace}
\DeclareTextFontCommand{\highlight}{\color{RedViolet}\em}
\newcommand{\mhighlight}[2][RedViolet!20]{\colorbox{#1}{$\displaystyle#2$}}
\theoremstyle{definition}
\newtheorem{definition}{Definition}
\newtheorem{assumption}{Assumption}
\theoremstyle{remark}
\newtheorem*{remark}{Remark}
\theoremstyle{remark}
\theoremstyle{plain}
\newtheorem{theorem}{Theorem}[section]
\newtheorem{proposition}{Proposition}[section]
\newtheorem{lemma}{Lemma}[section]
\newtheorem{corollary}{Corollary}[section]
\renewcommand{\emptyset}{\{\}}
\DeclareMathOperator*{\argmax}{arg\,max}
\DeclareMathOperator*{\argmin}{arg\,min}
\renewcommand{\Re}{\mathbb{R}}
\renewcommand{\SS}{\mathsf{X}}
\newcommand{\AS}{\mathsf{U}}
\newcommand{\SoR}[2][{}]{\mathsf{F}#1\!\left(#2\right)}
\newcommand{\PFsymbol}{\phi}
\newcommand{\PF}[2][{}]{\PFsymbol#1\!\left(#2\right)}
\newcommand{\DPFsymbol}{\psi}
\newcommand{\DPF}[3][{}]{\DPFsymbol#1\!\left(#2, #3\right)}
\newcommand{\subsof}[2][{}]{\mathfrak{B}#1\!\left(#2\right)}
\newcommand{\abs}[1]{\left\lvert #1 \right\rvert}
\newcommand{\card}[1]{\left\lvert #1 \right\rvert}
\newcommand{\supp}[1]{\textup{supp}\!\left(#1\right)}
\def\E(#1){\mathbb{E}\!\left[#1\right]}
\def\CE(#1|#2){\mathbb{E}\!\left[#1\,\middle\vert\,#2\right]}
\newcommand{\p}[2][{}]{\mathbb{P}#1\!\left(#2\right)}
\newcommand{\cp}[3][{}]{\mathbb{P}#1\!\left(#2\,\middle\vert\,#3\right)}
\DeclareRobustCommand{\circbullet}{\mathbin{\vphantom{\circ}\text{\circbullet@}}}
\newcommand{\circbullet@}{%
    \check@mathfonts
    \m@th\ooalign{%
        \clipbox{0 0 0 {\dimexpr\height-\fontdimen22\textfont2}}{$\bullet$}\cr
        $\circ$\cr
    }%
}
\icmltitlerunning{Reductive MDPs: A Perspective Beyond Temporal Horizons}
\begin{document}

\twocolumn[
\icmltitle{Reductive MDPs: A Perspective Beyond Temporal Horizons}

\icmlsetsymbol{equal}{*}
\icmlsetsymbol{past}{$\dagger$}

\begin{icmlauthorlist}
    \icmlauthor{Thomas Spooner}{jpm,past}
    \icmlauthor{Rui Silva}{jpm}
    \icmlauthor{Joshua Lockhart}{jpm}
    \icmlauthor{Jason Long}{jpm}
    \icmlauthor{Vacslav Glukhov}{jpm}
\end{icmlauthorlist}

\icmlaffiliation{jpm}{J.P.\ Morgan AI Research}
\icmlcorrespondingauthor{Thomas Spooner}{spooner10000@gmail.com}

\icmlkeywords{Markov Chains, Reinforcement Learning, Value Iteration, Drift Criteria}

\vskip 0.3in
]

\printAffiliationsAndNotice{} %

\begin{abstract}
    Solving general Markov decision processes (MDPs) is a computationally hard problem. Solving finite-horizon MDPs, on the other hand, is highly tractable with well known polynomial-time algorithms. What drives this extreme disparity, and do problems exist that lie between these diametrically opposed complexities? In this paper we identify and analyse a sub-class of stochastic shortest path problems (SSPs) for general state-action spaces whose dynamics satisfy a particular drift condition. This construction generalises the traditional, temporal notion of a horizon via decreasing reachability: a property called reductivity. It is shown that optimal policies can be recovered in polynomial-time for reductive SSPs---via an extension of backwards induction---with an efficient analogue in reductive MDPs. The practical considerations of the proposed approach are discussed, and numerical verification provided on a canonical optimal liquidation problem.
\end{abstract}

\section{Introduction}
The theory of Markov decision processes (MDPs) can broadly be divided into the study of finite- (FHMDP) and infinite-horizon problems. The latter have enjoyed a great deal of attention in the reinforcement learning (RL) literature as they capture a large class of problems~\cite{sutton:2018:reinforcement}, while assumptions such as ergodicity and unichain dynamics yield myriad tools of analysis from the study of Markov chains~\cite{arora:2012:deterministic}. The same is also true of the former, and indeed polynomial-time algorithms for finding an optimal policy have been known for polynomial horizons and finite state-spaces since the 1940s~\cite{arrow:1949:bayes,papadimitriou:1987:complexity}. While the story is more nuanced for binary-encoded horizons~\cite{fearnley:2015:complexity,balaji:2019:complexity}, one can efficiently solve many FHMDPs in practice using backwards induction (BI).

A natural question that arises is whether the performance benefits of BI are applicable to more general problems. To answer this, we introduce a property of absorbing Markov chains and MDPs that generalises the structural assumptions required by the backwards induction algorithm. Concretely, we identify a sufficient condition---reductivity---for (the transient part of) the transition matrix under a given policy to admit an upper-triangular form. This characterisation is based on the idea of uniformly decreasing reachability, is a sufficient condition for the existence of a Doeblin decomposition of the state-space~\cite{doeblin:1940:elements,tweedie:1993:doeblin}, and proffers a computationally tractable sub-class of stochastic shortest path (SSP) problems~\cite{bertsekas:1991:analysis,guillot:2020:stochastic}; see Figure~\ref{fig:classes}.

As a motivating example, consider the problem of driving a vehicle to a particular destination with limited fuel. There are no opportunities to refill the tank---you're in the countryside---and there is a terminal cost based on the remaining distance to the target location. This setting is naturally captured by the SSP framework and indeed similar problems are of active study in the literature; see e.g.~\cite{chen:2015:stochastic}. However, until recently, few results were known about the complexity of solving general indefinite-horizon SSPs, nor on the existence of low regret (online) learning algorithms~\cite{even:2005:experts,neu:2010:online,neu:2012:adversarial,rosenberg:2020:near,tarbouriech:2021:sample,chen:2021:finding}. In this paper, we show that SSPs with the aforementioned reductive property (as in the navigation example) admit a particularly simple polynomial-time algorithm based on a generalisation of backwards induction.

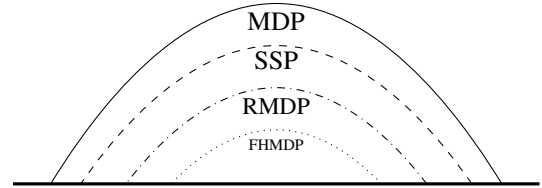
\begin{figure}[t]
    \centering
    \begin{tikzpicture}[yscale=0.8, xscale=1.0]
        \draw[very thick] (3.5,0) -- (-3.5,0);

        \draw (-3,0) parabola bend (0,3) (3,0);
        \node at (0,2.7) {MDP};

        \draw[dashed] (2.6,0) parabola bend (0,2.3) (-2.6,0);
        \node at (0,2.05) {SSP};

        \draw[dashdotted] (2,0) parabola bend (0,1.6) (-2,0);
        \node at (0,1.3) {\small RMDP};

        \draw[dotted] (1.4,0) parabola bend (0,0.9) (-1.4,0);
        \node at (0,0.65) {\tiny FHMDP};
    \end{tikzpicture}

    \caption{Class hierarchy of Markovian decision problems.}\label{fig:classes}
\end{figure}

\subsection*{Our Contributions}
\begin{enumerate}
    \item Propose a novel sub-class of SSPs, namely reductive MDPs (RMDPs), that characterise a large set of real-world problems; see Figure~\ref{fig:classes}.
    \item Analyse the complexity of said class and extend the backwards induction algorithm to applications therein.
    \item Perform numerical experiments on a canonical optimal liquidation problem, comparing performance to state-of-the-art value iteration algorithms.
\end{enumerate}

\subsection{Related Work}\label{sec:introduction:related_work}
\paragraph{Reachability}
Reachability is a diverse and richly studied topic in computer science that broadly pertains to the following question: ``given a system, can a particular state be reached for some initial condition?'' These kinds of queries feature in the context of deterministic control systems~\cite{sontag:2013:mathematical}, in Markov chains~\cite{akshay:2015:reachability}, MDPs~\cite{boutilier:1998:structured,haddad:2014:reachability,ashkenazi:2020:reachability}, and in graph theory~\cite{cormen:2009:introduction}. Reachability is also a prominent component in the analysis and verification of communication protocols~\cite{bochmann:1978:finite}. In this paper, we leverage the notion of reachability for a different purpose---namely, for the construction of a drift condition---but much insight on this setting can be drawn from existing work.

\paragraph{Value Iteration}
A great deal of research has been conducted on improving value iteration (VI) in finite-state MDPs, dating back to the earliest works on dynamic programming. In a seminal paper, \citet{bertsekas:1989:convergence} explored the convergence behaviour of asynchronous methods. Building on this, \citet{wingate:2004:p3vi} proposed a trio of enhancements including prioritisation, state partitioning~\cite{wingate:2003:efficient} and parallelisation; \citet{smith:2004:heuristic} simultaneously explored the use of heuristics. The themes covered in this work---such as the exploitation of structural properties of the MDP---have also featured in many influential papers~\cite{zang:2007:horizon,dai:2007:prioritizing,dai:2011:topological,grzes:2013:convergence}. We leverage many of these contributions, and propose an algorithm that complements the family of existing methods.

\section{Preliminaries}
We begin by introducing some standard notation and theory required to understand the content of this work: Markov chains (MCs), and Markov decision processes. Our nomenclature and notation broadly follow that of~\citet{meyn:2012:markov}. Vectors and matrices are written as $\bm{x}$ and $\bm{X}$, respectively; note the bold face. Sets, random variables and matrices are always in uppercase. Families of sets, such as (Borel) $\sigma$-algebras, will be denoted using variations on $\subsof{\cdot}$. Important quantities, such as the space of real values, will be expressed using blackboard font, or sans-serif.

\paragraph{Markov Chains}
An infinite-horizon, discrete-time MC is a sequence of random variables $\bm{X} \doteq \{X_n : n \in\mathbb{N}\}$ whose support occupy a state-space $\SS$. We assume time-homogeneity of $\bm{X}$, implying the existence of a transition semigroup $\{\p[_n]{x, B} : x\in\SS, B\in\subsof{\SS}\}$ whereby
\begin{align*}
    \p[_n]{x, B}
        &\doteq \cp{X_{m+n} \in B}{X_i, i < m, X_m = x}, \\
        &= \cp{X_{m+n} \in B}{X_m = x},
\end{align*}
denotes the (Markovian) probability of transitioning from the state $x$ to the set $B$ in $n$ steps from any time $m\geq 0$,\footnote{Note the lack of notational dependence on $m$. This is precisely the time-homogeneity property.} and the particular form of $\sigma$-algebra $\subsof{\SS}$ is dictated by the state-space~\citep[ch.~3]{meyn:2012:markov}. A state $x\in\SS$ is said to lead to a set $B\in\subsof{\SS}$ if there exists some $n > 0$ for which $\p[_n]{x, B} > 0$. The time of first return to a set $B$ is then defined as $\tau_B \doteq \min\!\left\{ n \geq 1 : X_n \in B \right\}$ such that the probability of a state $x$ ever leading to the set $B$ is
\begin{equation}
    \p[_\infty]{x, B} \doteq \cp{\tau_B < \infty}{X_0 = x}.
\end{equation}
We also define the collection of \highlight{absorbing and indecomposable (a.i.)} subsets of a family $\subsof{A}$ as
\begin{equation}\label{eq:borel:bullet}
    \boxed{\subsof[_\bullet]{A} \doteq \left\{B \in \subsof{A} : B \textrm{ a.i.} \right\}}
\end{equation}
for $A \subseteq \SS$~\cite{tweedie:1993:doeblin}. The absorbing property ensures that, for any such subset $B \in \subsof[_\bullet]{\SS}$, the chain remains almost surely (a.s.) upon entry; that is, for $C\subset \SS$ with $C \cap B = \emptyset$, we may assert that $\p[_\infty]{x, B} = 1$ and $\p[_\infty]{x, C} = 0$ for all $x \in B$. The indecomposability property then ensures that each $B$ contains no disjoint pair of absorbing subsets. This allows us to perform meaningful decompositions of $\SS$ via complete covers. For brevity, let
\begin{equation}
    \boxed{\SS_\bullet \doteq \bigcup \subsof[_\bullet]{\SS} \quad\text{and}\quad \SS_\circ \doteq \SS \setminus \SS_\bullet.}
\end{equation}
Finally, we recall that an \highlight{inessential} set $A\in\subsof{\SS}$ is such that, from any state $x\in\SS$, the probability of visiting $A$ infinitely often is zero~\cite{doeblin:1940:elements}.

\paragraph{Markov Decision Processes}
MDPs extend MCs by the inclusion of an action space $\AS$ and a reward function $r : \SS\times\AS\times\SS \to \Re$. The former gives rise to a joint state-action process $(\bm{X}, \bm{U})$ wherein the state-transition kernel (i.e.\ probability of transitioning from one state to a successor set) conditions on the state and action: $\p[_1]{x, B} = \int_{\AS(x)}\cp[_1]{x, B}{u} \, \textup{d}\cp{u}{x}$ for $x\in\SS$ and $B\in\subsof{\SS}$; the term $\AS\!\left(x\right)$ allows for action masking. At each time $t \geq 0$, the agent selects an action, denoted $U_t$, by sampling its policy $\pi \in \Pi \doteq \left\{ \SS \to \mathcal{P}\!\left(\AS\right) \right\}$ in order to achieve a pre-specified goal. Typically, this is to maximise the (discounted) sum of future rewards as quantified by the action-value function
\begin{displaymath}
    q_\pi\!\left(x, u\right) \doteq \mathbb{E}_\pi\!\left[ \sum_{\tau = t}^\infty \gamma^{\tau-t} r\!\left(X_\tau, U_\tau, X_{\tau+1}\right) \; \middle\vert \; \begin{aligned}X_t &= x \\ U_t &= u\end{aligned} \right]
\end{displaymath}
and value function $v_\pi\!\left(x\right) \doteq \mathbb{E}\left[q_\pi\!\left(x, u\right) \;\middle\vert\; u\sim\pi\!\left(x\right)\right]$, where $\gamma \in [0, 1]$. Maximising these quantities with respect to $\pi$ is well-known to yield an optimal policy $\pi^\star$, and is indeed at the foundation of value iteration methods~\cite{puterman:2014:markov}.

\section{Forward Reachability}
For any Markov chain $\bm{X}$, we can characterise the \highlight{$n$-step reachable state set} from a point $x\in\SS$ as the collection
\begin{equation}\label{eq:reachable_states:n}
    \boxed{\SoR[_n]{x} \doteq \bigcap \left\{B \in \subsof{\SS} : \p[_n]{x, B} = 1 \right\},}
\end{equation}
for $n > 0$ and with $\SoR[_0]{x} \doteq \left\{x\right\}$. This describes the minimal subset of states that are accessible from $x$.\footnote{An intersection of almost sure sets, as opposed to a union over non-null sets, avoids issues of spurious inclusion.} The \highlight{reachable state set} is then defined as the union thereof,
\begin{equation}\label{eq:reachable_states}
    \boxed{\SoR{x} \doteq \bigcup\nolimits_{n = 0}^\infty \SoR[_n]{x},}
\end{equation}
where, for any given state, it follows that $\SoR[_n]{x} \subseteq \SoR{x} \subseteq \SS$ for all $n \geq 0$. Note that \eqref{eq:reachable_states} does not necessarily define a closed set of states, nor a communicating class, since forward reachability is uni-directional and $\SoR{x}$ includes $\SoR[_0]{x}$ even if $\p[_\infty]{x, \{x\}} = 0$. It is, however, true that $\p[_\infty]{x, \SoR{x}\setminus\{x\}} = 1$ for all $x\in\SS$, relating \eqref{eq:reachable_states:n} and \eqref{eq:reachable_states} to return times~\cite{meyn:2012:markov}. Finally, we define
\begin{equation}
    \SoR{A} \doteq \bigcup\nolimits_{x\in A} \SoR{x}
\end{equation}
as the collection of all reachable states from a set $A\subseteq\SS$.

\subsection{Reachability Potentials}
A central theme of this work rests in the quantification of the number of states that are reachable from different points in $\SS$. Doing so for a wide class of processes thus motivates a measure-theoretic treatment. Consider a chain diffusing on the unit interval, $\SS \doteq [0, 1]$, where $\SoR[_1]{x} \subseteq [0, x)$ for all $x\in\SS$ and $\SoR{x} \setminus \SoR[_0]{x} = \SoR[_1]{x}$; we refer to this throughout as the \shrinkingintervals{} problem (see Section~\ref{sec:shrinking_intervals} in the appendix for a visualisation). The natural choice of measure in this case is the Lebesgue measure, $\ell$, yielding $\ell\circ\SoR{x} = x$. In the following, we formalise this concept as a natural class of potential functions that are defined on $\SS$.

For a measure $\mu : \subsof{\SS} \to [0, \infty)$, we define a (reachability) \highlight{potential function $\PFsymbol_\mu : \SS \to [0, \infty)$} as the composition
\begin{equation}\label{eq:potential}
    \boxed{\PF[_\mu]{x} \doteq \mu\circ\SoR{x}.}
\end{equation}
The output of $\PFsymbol_\mu$ characterises the number of reachable states from a point $x\in\SS$; though the meaning varies based on $\mu$. From this, one can define a function over transitions,
\begin{equation}\label{eq:dpf}
    \boxed{\DPF[_\mu]{x}{x'} \doteq \PF[_\mu]{x'} - \PF[_\mu]{x},}
\end{equation}
which, loosely speaking, is the difference in the number of reachable states between $x$ and $x'$ with respect to the chosen measure.
We note that $\DPFsymbol_\mu$ can always be decomposed by $\sigma$-additivity to give $\DPF[_\mu]{x}{x'} = \mu\!\left(\SoR{x'} \setminus \SoR{x}\right) - \mu\!\left(\SoR{x} \setminus \SoR{x'}\right)$, which is often easier to compute than \eqref{eq:dpf} directly. Indeed, since $x'\in\SoR[_1]{x}$ a.s., it follows that
\begin{displaymath}
    \DPF[_\mu]{x}{x'} = -\mu\!\left(\SoR{x} \setminus \SoR{x'}\right) \leq 0.
\end{displaymath}


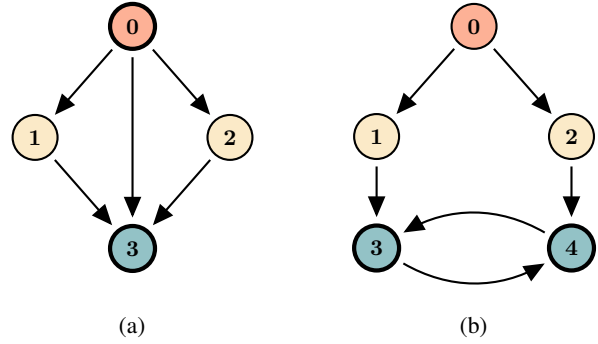
\begin{figure}[t]
    \centering
    \begin{subfigure}[t]{0.45\linewidth}
        \centering
        \begin{tikzpicture}[scale=0.85, transform shape]
            \draw[white] (-1,-4.25) rectangle (1,0);
            \node[latent, ultra thick, fill=transientcolour] (0) at (0,0) {$\bm{0}$};

            \node[latent, draw=white, below=of 0] (ghost) {};
            \node[latent, thick, fill=convergecolour, left=.8 of ghost] (1) {$\bm{1}$};
            \node[latent, thick, fill=convergecolour, right=.8 of ghost] (2) {$\bm{2}$};

            \node[latent, ultra thick, fill=recurrentcolour, below=of ghost] (3) {$\bm{3}$};

            \edge[thick, shorten >= 2.5pt, shorten <= 2.5pt] {0} {1, 2, 3};
            \edge[thick, shorten >= 2.5pt, shorten <= 2.5pt] {1, 2} {3};
        \end{tikzpicture}
        \caption{}\label{fig:rmcs:left}
    \end{subfigure}
    \hfill
    \begin{subfigure}[t]{0.45\linewidth}
        \centering
        \begin{tikzpicture}[scale=0.85, transform shape]
            \draw[white] (-1,-4.25) rectangle (1,0);
            \node[latent, thick, fill=transientcolour] (0) at (0,0) {$\bm{0}$};

            \node[latent, draw=white, below=of 0] (ghost) {};
            \node[latent, thick, fill=convergecolour, left=.8 of ghost] (1) {$\bm{1}$};
            \node[latent, thick, fill=convergecolour, right=.8 of ghost] (2) {$\bm{2}$};

            \node[latent, ultra thick, fill=recurrentcolour, below=of 1] (3) {$\bm{3}$};
            \node[latent, ultra thick, fill=recurrentcolour, below=of 2] (4) {$\bm{4}$};

            \edge[thick, shorten >= 2.5pt, shorten <= 2.5pt] {0} {1, 2};
            \edge[thick, shorten >= 2.5pt, shorten <= 2.5pt] {1} {3};
            \edge[thick, shorten >= 2.5pt, shorten <= 2.5pt] {2} {4};

            \path[->, thick, shorten >= 2.5pt, shorten <= 2.5pt] (3) edge [bend right] (4);
            \path[->, thick, shorten >= 2.5pt, shorten <= 2.5pt] (4) edge [bend right] (3);
        \end{tikzpicture}
        \caption{}\label{fig:rmcs:right}
    \end{subfigure}

    \caption{Finite MCs with the reductive property under the counting measure. {\color{transientcolour}Red} and {\color{convergecolour}yellow} nodes are transient, {\color{recurrentcolour}blue} nodes are recurrent, and thick borders indicate the existence of a self-loop.}
    \label{fig:rmcs}
\end{figure}

\section{Reductive Processes}\label{sec:rmcs}
A reductive process is, in essence, any process where the set of reachable states shrinks over time; see Figure~\ref{fig:rmcs}. A finite-horizon chain, for example, is an important case where this property holds: time inexorably moves forward. It also encompasses the loop-free SSP setting studied by, e.g., \citet{neu:2010:online} in which $\SS$ is composed of ``layers.'' Yet there are many other problems that have equivalent behaviour, such as finite resource domains and those for which there is a strong sense of irreversibility; e.g.\ navigation with limited fuel, or optimal liquidation. To formalise this, we first identify those \highlight{non-absorbing states that exhibit self-loops},
\begin{equation}\label{eq:loops}
    \mathsf{L}\!\left(A\right) \doteq \left\{ x \in A : 0 < \p{x, \{x\}} < 1 \right\}
\end{equation}
with $A\subseteq\SS$, and then provide the following definitions of reductive Markov chains and decision processes.

\begin{definition}[RMC]\label{def:rmc}
    Let $\mu : \subsof{\SS} \to [0, \infty)$ be a measure. Then, an MC is said to be $\mu$-reductive if $\SS_\bullet \ne \emptyset$ and
    \begin{equation}\label{eq:rmcs}
        \begin{cases}
            \DPF[_\mu]{x}{x'} = 0 &\quad \textrm{for } x' \in \SS_\bullet \cup \mathsf{L}\!\left(\{x\}\right), \\
            \DPF[_\mu]{x}{x'} < 0 &\quad \textrm{otherwise},
        \end{cases}
    \end{equation}
    holds for all states $x\in\SS$ and their successors $x'\in\SoR[_1]{x}$.
\end{definition}

\begin{remark}
    There exist a large class of vacuously reductive MCs where $\SS_\bullet = \SS$; e.g.\ the random walk on $\mathbb{Z}$. These are not our interest, but precluding such cases from Definition~\ref{def:rmc} would be contrived. In the remainder of the paper we will focus on ``non-trivial'' instances of reductivity.
\end{remark}

\begin{definition}[RMDP]\label{def:rmdp}
    An MDP is said to be $\mu$-reductive if every policy induces a $\mu$-RMC.
\end{definition}

These definitions deserve some attention. First, observe that the \highlight{equality condition} in \eqref{eq:rmcs} ensures that $\SS_\bullet$ corresponds to a fixed point of $\phi_\mu$ under the forward difference operator (see Section~\ref{sec:rmcs:convergence}) while permitting self-loops in the transient set $\SS_\circ$, provided they have probability less than one; thus $\mathsf{L}\!\left(\SS\right)$ need not be empty. Note that any state for which $\p{x, \{x\}} = 1$ is necessarily a.i.\ and would therefore occupy $\SS_\bullet$, hence the distinction from $\mathsf{L}\!\left(\SS\right)$. Second, the \highlight{inequality condition} guarantees that any other transition that can occur must strictly reduce the potential of the system. These two conditions are always well defined given the constraint that $\mu\!\left(\SS\right)$ be finite and form a \highlight{sufficient condition for a Doeblin decomposition on $\SS$}; see Theorem~\ref{thm:decomp} below.\footnote{This result can be seen as a special case of the famous Th\'{e}or\`{e}me I de \citet{doeblin:1940:elements}.}

\begin{theorem}[Decomposition]\label{thm:decomp}
    For any $\mu$-RMC, the set $\SS_\circ$ is inessential and $\SS_\bullet \cup \SS_\circ$ forms a Doeblin decomposition with possibly uncountable set $\subsof[_\bullet]{\SS}$.
\end{theorem}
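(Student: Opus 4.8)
The plan is to use the reachability potential $\PFsymbol_\mu$ as a bounded Lyapunov (drift) function and to read the decomposition off its long-run behaviour along the chain. First I would record that, because every realised one-step transition satisfies $\DPF[_\mu]{x}{x'} = -\mu\!\left(\SoR{x} \setminus \SoR{x'}\right) \leq 0$, the process $V_n \doteq \PF[_\mu]{X_n}$ is almost surely non-increasing; being non-negative and bounded above by $\mu\!\left(\SS\right) < \infty$, it is a bounded non-negative supermartingale and converges almost surely to some limit $V_\infty$. I would also note that the reachable sets nest, $\SoR{X_{n+1}} \subseteq \SoR{X_n}$, decreasing to a limit set $R_\infty$ with $\mu\!\left(R_\infty\right) = V_\infty$ by continuity of the finite measure $\mu$ from above.

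Next I would dispatch the structural half of the claim. By construction, each $B \in \subsof[_\bullet]{\SS}$ is absorbing and indecomposable, so $\SS_\bullet$ is absorbing---once entered, the chain remains within a single atom almost surely---and these atoms serve as the recurrent classes of the decomposition. The RMC hypothesis guarantees $\SS_\bullet \neq \emptyset$, so such classes exist; crucially, no countability is assumed of $\subsof[_\bullet]{\SS}$, so the argument must treat the family measure-theoretically rather than enumerate it. Together with $\SS_\circ \doteq \SS \setminus \SS_\bullet$ this yields the claimed partition, and it remains only to verify that $\SS_\circ$ is inessential.

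The heart of the proof is the inessentiality of $\SS_\circ$. Since $\SS_\bullet$ is absorbing, visiting $\SS_\circ$ infinitely often is equivalent to never entering $\SS_\bullet$, so it suffices to show $\p{X_n \in \SS_\circ \text{ for all } n} = 0$, i.e.\ that the chain is absorbed in finite time almost surely. I would argue on the event that the trajectory stays in $\SS_\circ$ forever: there every non-self-loop successor falls in the inequality branch of Definition~\ref{def:rmc} and strictly decreases $V_n$, while each self-loop state lies in $\mathsf{L}\!\left(\{x\}\right)$ and is non-absorbing ($\p{x, \{x\}} < 1$), hence exited almost surely. Because the potential is monotone no transient state can be revisited, so infinitely many strict decrements of $V_n$ would accumulate; but telescoping gives $\sum_n \left(V_n - V_{n+1}\right) = V_0 - V_\infty < \infty$, forcing the decrements to vanish and the reachable sets to stabilise at the indecomposable limit $R_\infty \subseteq \SS_\bullet$. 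One then argues that this stabilisation is incompatible with the supposition that the chain never meets $\SS_\bullet$, so the chain enters $\SS_\bullet$ almost surely, $\SS_\circ$ is inessential, and $\SS_\bullet \cup \SS_\circ$ is a Doeblin decomposition.

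The step I expect to be the main obstacle is precisely this last incompatibility in the general, uncountable setting. For a finite or countable state-space the strict decrements are bounded below by a positive constant and finite-time absorption is immediate; without that bound the decrements may vanish while the chain only approaches $\SS_\bullet$ asymptotically, and the delicate point is to show that stabilisation of $\SoR{X_n}$ to an indecomposable limit genuinely entails absorption rather than mere asymptotic approach. Making this rigorous is where the indecomposability of the atoms, the finiteness of $\mu$ (continuity from above), and ultimately the machinery behind the decomposition theorem of \citet{doeblin:1940:elements} must be brought to bear.
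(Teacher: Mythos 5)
Your proposal follows essentially the same route as the paper's own proof: pathwise monotonicity of the potential sequence $\left\{\PF[_\mu]{X_n}\right\}_{n\geq 0}$ forced by \eqref{eq:rmcs}, almost-sure convergence from finiteness of $\mu$ (the paper invokes the monotone convergence theorem where you invoke bounded supermartingale convergence; both are just bounded monotone convergence), vanishing one-step decrements by telescoping, separate disposal of the non-absorbing self-loops in $\mathsf{L}\!\left(\SS\right)$, and finally the claim that the chain must end up in $\SS_\bullet$. Even your nesting observation $\SoR{X_0}\supseteq\SoR{X_1}\supseteq\cdots$ appears in the paper's level-set discussion. So as a reconstruction of the paper's argument, your attempt is faithful and, in places, more careful.

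The obstacle you flag at the end is precisely where the paper's proof is weakest: having shown the decrements vanish, the paper concludes in a single unsupported sentence that the state process ``reaches an absorbing and indecomposable'' set almost surely. Your suspicion that this step cannot be closed as stated is correct, and your reduction of inessentiality to almost-sure \emph{finite-time} absorption makes the difficulty concrete: the paper's own \shrinkingintervals example refutes finite-time absorption, since $X_{n+1}=Y_{n+1}\cdot X_n$ with $Y_{n+1}$ uniform on $[0,1)$ stays strictly positive for every finite $n$, so $\SS_\circ=(0,1]$ is visited at every step (hence infinitely often with probability one) while $\SS_\bullet=\{0\}$ is never entered. Under the literal definition of an inessential set given in the paper's preliminaries, the strong reading of Theorem~\ref{thm:decomp} therefore fails in that example; what genuinely survives in general state spaces is either the asymptotic statement of Corollary~\ref{corr:convergence} (convergence of the support process to an a.i.\ set) or inessentiality in the weaker Doeblin--Tweedie sense that $\SS_\circ$ is a countable union of sets each visited only finitely often. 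One small correction to your closing remark: the gap is not confined to uncountable spaces, since for countably infinite $\SS$ the strict decrements need not be bounded away from zero either, and one can build countable analogues of the shrinking-intervals chain that never reach $\SS_\bullet$. In short, the gap you identify is real, but it is a gap in the paper's proof as well, not one introduced by your approach.
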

\begin{proof}[Proof]
    Let $\bm{X}$ denote a $\mu$-RMC and $\bm{x} \in \supp{\bm{X}}$ a realisation of the process. The induced sequence of potentials $\bm{\phi}_\mu \doteq \left\{ \PF[_\mu]{x_n} \right\}_{n\geq 0}$ must obey \eqref{eq:rmcs}, by virtue of reductivity, which implies that $\bm{\phi}_\mu$ is monotone decreasing and bounded from below as $\mu$ is finite and unsigned by construction. Invoking the monotone convergence theorem, we see that $s \doteq \lim_{n\to\infty} \PF[_\mu]{x_n} = \inf_n \PF[_\mu]{x_n}$ and thus $\lim_{n\to\infty} \DPF[_\mu]{x_n}{x_{n+1}} = s - s = 0$ by the difference law of limits. Since non-absorbing self-loops a.s.\ occur a finite number of times by \eqref{eq:loops}, it must also be that the state process reaches an absorbing and indecomposable a.s.. Finally, by observing that the arguments presented above apply to all trajectories in $\supp{\bm{X}}$, we may conclude that the original claim holds.
\end{proof}

Interestingly, the choice to take $\SS_\bullet$ as the set of successor states with stationary potential is unique in the sense that no other subset of $\SS$ could satisfy \eqref{eq:rmcs}. This completeness property of the RMC definition itself, which is stated in Proposition~\ref{prop:completeness} below, is not entirely surprising---it is an artefact of the formative results first presented by \citet{doeblin:1940:elements} some 80 years ago. Nevertheless, this observation helps justify our construction and suggests that reductivity is a restrictive, but fundamental property of a chain.

\begin{proposition}[Completeness]\label{prop:completeness}
    Suppose we replaced $\SS_\bullet$ in Definition~\ref{def:rmc} with any other subset $B \subset \SS$. Then no MC exists that can satisfy the candidate drift criterion.
\end{proposition}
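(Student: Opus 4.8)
The plan is to show that the candidate set $B$ is forced to coincide with $\SS_\bullet$, so that no genuinely different choice can satisfy \eqref{eq:rmcs}. The starting observation is that, because $\DPF[_\mu]{x}{x'} \leq 0$ for every admissible transition, the two cases in \eqref{eq:rmcs} can be fused into a single biconditional: for all $x \in \SS$ and $x' \in \SoR[_1]{x}$, we have $\DPF[_\mu]{x}{x'} = 0$ if and only if $x' \in B \cup \mathsf{L}\!\left(\{x\}\right)$. Since $\mathsf{L}\!\left(\{x\}\right) \subseteq \{x\}$ only accounts for sub-unit self-loops, this characterises $B$ as exactly the set of non-self-loop successors across which the reachability potential is stationary. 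I would then establish the two inclusions $\SS_\bullet \subseteq B$ and $B \subseteq \SS_\bullet$ separately.

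For $\SS_\bullet \subseteq B$, take $x \in \SS_\bullet$, lying in some a.i.\ set $A \in \subsof[_\bullet]{\SS}$. Absorption gives $\SoR{z} = A$ up to $\mu$-null sets for every $z \in A$, so $\PFsymbol_\mu$ is constant on $A$ and every intra-$A$ transition has $\DPF[_\mu]{\cdot}{\cdot} = 0$. Indecomposability (irreducibility within $A$) supplies a predecessor $w \in A$ with $x \in \SoR[_1]{w}$; when $x \neq w$ the biconditional immediately yields $x \in B$, and the degenerate case of a singleton absorber $A = \{x\}$ with $\p{x, \{x\}} = 1$ is handled by the self-transition, for which $\mathsf{L}\!\left(\{x\}\right) = \emptyset$ forces $x \in B$ as well.

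The reverse inclusion $B \subseteq \SS_\bullet$ is the crux. The ``only if'' half of the biconditional is the lever: if $y \in B$ then every predecessor $w$ (every $w$ with $y \in \SoR[_1]{w}$) must satisfy $\DPF[_\mu]{w}{y} = 0$, equivalently $\mu\!\left(\SoR{w}\right) = \mu\!\left(\SoR{y}\right)$. I would argue that a transient $y \in \SS_\circ$ cannot sustain this. In the atomic case the contradiction is immediate: a transient state lies on no cycle, so any predecessor $w \notin \SoR{y}$ contributes its own positive mass, giving $\mu\!\left(\SoR{w}\right) > \mu\!\left(\SoR{y}\right)$ and hence $\DPF[_\mu]{w}{y} < 0$, so $y \notin B$. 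For general $\mu$ I would instead invoke Theorem~\ref{thm:decomp}: since $\SS_\circ$ is inessential and the process a.s.\ reaches an a.i.\ set along which the limiting potential $s = \inf_n \PF[_\mu]{x_n}$ is attained, the equipotential-predecessor constraint propagated along a trajectory through $y$ pins $\PFsymbol_\mu$ at the level $\PF[_\mu]{y}$; reconciling this with the a.s.\ attainment of $s$ forces $\PF[_\mu]{y} = s$, and the indecomposability of the attained limit set then places $y \in \SS_\bullet$.

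Combining the inclusions gives $B = \SS_\bullet$, so any $B \subset \SS$ distinct from $\SS_\bullet$ renders \eqref{eq:rmcs} unsatisfiable, which is the claim. The hard part will be the reverse inclusion: the local drift condition alone does not preclude inserting a spurious transient state into $B$, and ruling this out requires the global convergence and absorption structure of Theorem~\ref{thm:decomp}, together with care over $\mu$-null sets, the possibly uncountable family $\subsof[_\bullet]{\SS}$, and states that never arise as one-step successors (for which the ``only if'' direction is vacuous and membership must be excluded by a reachability/support convention on $\SS$).
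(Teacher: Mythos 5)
Your proposal is correct and follows essentially the same route as the paper's own proof: your two inclusions are exactly the paper's two-case analysis, with $B \subseteq \SS_\bullet$ (no transient state can sit in $B$) established via the inessentiality of $\SS_\circ$ from Theorem~\ref{thm:decomp}, and $\SS_\bullet \subseteq B$ forced by the uniform potential on a.i.\ sets, which is precisely the content of Lemma~\ref{lem:aiae_potential}. If anything, your predecessor-based argument for the inclusion $\SS_\bullet \subseteq B$ and your explicit flagging of the edge cases ($\mu$-null sets, singleton absorbers, states that never arise as successors) are slightly more careful than the paper's successor-based phrasing, which glosses over these points.
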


\subsection{Convergence, Stability and Level-Sets}\label{sec:rmcs:convergence}
The first question we might ask about the behaviour of reductive processes is whether they are stable or converge in some sense. It is well understood that absorbing MCs---of which RMCs are a subclass---on finite state-spaces eventually reach $\SS_\bullet$~\cite{kemeny:1976:finite}. It is also known that the state process of any chain admitting a Doeblin decomposition will eventually reach one of the a.i.\ subsets~\cite{doeblin:1940:elements}.\footnote{A sufficient condition for this decomposition is that there exists a finite measure $\nu$ attributing positive mass to each absorbing subset of $\SS$~\cite{tweedie:1993:doeblin}.} As a third angle of attack, we remark that Eq. \eqref{eq:rmcs} is nothing but a negative drift condition with Lyapunov function $\PFsymbol_\mu$~\citep[ch.~11,19]{meyn:2012:markov}. In Corollary~\ref{corr:convergence} below we consolidate these observations, showing that the support of \highlight{an RMC $\bm{X}$ eventually reaches the a.i.\ subspace $\SS_\bullet$ regardless of $\mu$}.

\begin{corollary}[Convergence]\label{corr:convergence}
    For any $\mu$-RMC, the support process $\bm{M} \doteq \{M_n \doteq \supp{X_n}\}_{n \geq 0}$ converges almost surely to an a.i.\ subset: $\p{\lim_{n\to\infty} M_n \in \subsof[_\bullet]{\SS}} = 1$.
\end{corollary}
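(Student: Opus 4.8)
The plan is to leverage Theorem~\ref{thm:decomp} as the analytic backbone and to treat the remaining work as a translation from trajectory-level statements to the support process, together with a refinement that pins the limit to a \emph{single} a.i.\ subset. Indeed, Theorem~\ref{thm:decomp} already establishes that $\SS_\circ$ is inessential and that each realised path almost surely enters $\SS_\bullet$; what is left is to show that $M_n = \supp{X_n}$ settles on one element of $\subsof[_\bullet]{\SS}$, and that this happens for $\mathbb{P}$-almost every trajectory and for every admissible $\mu$.

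First I would fix a trajectory $\bm{x} \in \supp{\bm{X}}$. By the inessentiality of $\SS_\circ$ from Theorem~\ref{thm:decomp}---itself a consequence of the monotone decrease of the Lyapunov potential $\PF[_\mu]{x_n}$ under \eqref{eq:rmcs}---together with the definition of an inessential set, the path visits $\SS_\circ$ only finitely often almost surely; hence there is an a.s.\ finite time $N$ such that $x_n \in \SS_\bullet$ for all $n > N$. Since $\SS_\bullet = \bigcup \subsof[_\bullet]{\SS}$, the state $x_{N+1}$ belongs to some $B \in \subsof[_\bullet]{\SS}$, and the absorbing property---$\p[_\infty]{x, B} = 1$ and $\p[_\infty]{x, C} = 0$ for $x \in B$ and $C \cap B = \emptyset$---ensures that the path remains in $B$ thereafter. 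Indecomposability then precludes any proper absorbing refinement of $B$, so no further collapse of the reachable set is possible and the limit is $B$ itself.

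It remains to pass from the path to the support process. For $n > N + 1$ the conditional law of $X_n$ given the realised history is carried by $\SoR[_1]{x_{n-1}} \subseteq B$, so $M_n \subseteq B$; conversely, indecomposability forbids $B$ from splitting into smaller absorbing pieces, which yields $M_n \to B$ in the set-theoretic sense along the trajectory. Because $B \in \subsof[_\bullet]{\SS}$ and this conclusion holds for almost every $\bm{x}$---and crucially does not reference the particular $\mu$ beyond its role in Theorem~\ref{thm:decomp}, delivering the ``regardless of $\mu$'' claim---we obtain $\p{\lim_{n\to\infty} M_n \in \subsof[_\bullet]{\SS}} = 1$, as required.

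The step I expect to be the main obstacle is precisely this last transfer: ensuring that the limit is a single a.i.\ set rather than a union over several reachable components, and doing so without running into the measurability issues created by the possibly uncountable family $\subsof[_\bullet]{\SS}$ flagged in Theorem~\ref{thm:decomp}. My resolution is to argue entirely pointwise along trajectories, so that indecomposability selects exactly one $B$ per path and no uniform (hence no countability) hypothesis on $\subsof[_\bullet]{\SS}$ is needed; the exceptional null set is then assembled as the complement of the almost-sure event on which Theorem~\ref{thm:decomp} holds.
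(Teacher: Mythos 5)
Your proposal is correct and takes essentially the same route as the paper: it invokes Theorem~\ref{thm:decomp} for almost-sure escape from the inessential set $\SS_\circ$, then uses the fact that $\SS_\bullet$ is a cover by the a.i.\ sets $\subsof[_\bullet]{\SS}$---together with the absorbing and indecomposability properties (cf.\ Lemma~\ref{lem:aiae_potential})---to pin the limit of the support process to a single element of $\subsof[_\bullet]{\SS}$. The paper's two-line proof is simply a compression of exactly this argument, with the trajectory-wise details you spell out left implicit.
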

\begin{proof}
    The claim follows from Theorem~\ref{thm:decomp} and the fact that $\SS_\bullet$ is formed by a cover of the a.i.\ sets $\subsof[_\bullet]{\SS}$.
\end{proof}

\paragraph{Example.}
In the \shrinkingintervals{} problem, any initial condition $x\in\SS$ yields a sequence $X_0 \doteq x, X_1, X_2, \dots$ that converges monotonically and almost surely to the infimum value $x_\infty \in [0, 1)$. For instance, if we let $\bm{Y}$ denote an i.i.d.\ disturbance sequence of uniformly distributed random variables on the half-interval $[0, 1)$, then the chain defined by $X_{n+1} \doteq Y_{n+1} \cdot X_n$ with $X_0 \doteq 1$ converges a.s.\ to $x_\infty = 0$; see Lemma~\ref{lem:shrinking} in the appendix. Yet, while the finite-measure coverage condition first proposed by \citet{doeblin:1940:elements} does not hold for $\ell$ in this case, we can assert that: (a) $\SS_\bullet = \left\{x_\infty\right\}$ is a.i.; (b) $\SS_\circ = \left(x_\infty, 1\right]$ is inessential; and (c) that $\SS_\bullet$ and $\SS_\circ$ are disjoint. These together ensure that Corollary~\ref{corr:convergence} holds as anticipated with $\SS = [x_\infty, 1]$.

\paragraph{Potential Level-Sets.}
For further intuition, let us ponder the structure of $\SS$ imposed by the potential. The subspace $\SS_\bullet$ is constructed as a union over a.i.\ subsets of $\SS$, that is $B \subseteq \SS_\bullet$ for all $B \in \subsof[_\bullet]{\SS}$. There must therefore exist constants $\left\{b_B\right\}_{B\in\subsof[_\bullet]{\SS}}$ such that $\PF[_\mu]{B} = b_B = \PF[_\mu]{x}$ for all states $x \in B$. This property extends as a lower bound to the set of predecessor states. Concretely, if we define
\begin{equation}\label{eq:g_map}
    \mathsf{G}_n\!\left(A\right) \doteq \left\{x\in\SS\setminus A : \sum\nolimits_{k=1}^n \p[_k]{x, A} > 0\right\}
\end{equation}
as the states that lead to $A \in \subsof{\SS}$ (excluding self-loops) in up to $n > 0$ steps, then $b_B \leq \PFsymbol_\mu \circ \mathsf{G}_n\!\left(B\right)$. The bound is recursive in the sense that, for a given $B\in\subsof{\SS}$,
\begin{equation}\label{eq:inf_sup}
    \inf_{x' \in \mathsf{G}_m\!\left(B\right)} \PF[_\mu]{x'} \geq \sup_{x \in \mathsf{G}_n\!\left(B\right)} \PF[_\mu]{x}
\end{equation}
for all $m > n > 0$. This property of all Markov chains is a consequence of the fact that $\SoR{X_0} \supseteq \SoR{X_1}  \supseteq \SoR{X_2} \cdots$ is almost surely a monotone sequence. The key distinction between MCs and RMCs lies in the conditions for which the inequality in \eqref{eq:inf_sup} is strict.

\begin{figure}[t]
    \centering
    \begin{tikzpicture}
        \tikzmath{\msize = 4;}

        \draw[thick] (0,0) rectangle (\msize, \msize);

        \draw[thick, dashed] (0, 0.3*\msize) -- (0.7*\msize, 0.3*\msize);
        \draw (1.1*\msize/3, 0.15*\msize) node {$\bm{0}$};
        \draw (0.7*\msize/3, 0.5*\msize) node {$\bm{0}$};

        \draw[thick, fill=transientcolour, miter limit=1] (0, \msize) -- (0.7*\msize, 0.3*\msize) -- (0.7*\msize, \msize) -- (0, \msize) -- cycle;
        \draw (1.5*\msize/3, 2.3*\msize/3) node {$\bm{P}_\circ$};

        \draw[thick, fill=convergecolour] (0.7*\msize, \msize) -- (0.7*\msize, 0.3*\msize) -- (\msize, 0.3*\msize) -- (\msize, \msize) -- (0, \msize) -- cycle;
        \draw (2.55*\msize/3, 2*\msize/3) node {$\bm{P}_{\circbullet}$};

        \draw[thick, fill=recurrentcolour] (0.7*\msize, 0.3*\msize) -- (\msize, 0.3*\msize) -- (\msize, 0) -- (0.7*\msize, 0) -- (0.7*\msize, 0.3*\msize) -- cycle;
        \draw (0.85*\msize, 0.15*\msize) node {$\bm{P}_\bullet$};
    \end{tikzpicture}

    \caption{Canonical decomposition of the transition dynamics matrix $\bm{P}$ into {\color{transientcolour}trans}{\color{convergecolour}ient} and {\color{recurrentcolour}recurrent} sets for reductive Markov chains.}
    \label{fig:block_decomp}
\end{figure}
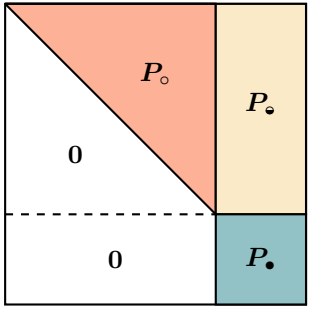

\subsection{Monotone Transition Dynamics}\label{sec:rmcs:monotone}
Take a $\mu$-RMC and impose a \highlight{binary relation $\preccurlyeq_\mu$} over the state-space where, for all $x,x'\in\SS$,
\begin{equation}\label{eq:preorder}
    \boxed{x \preccurlyeq_\mu x' \iff \phi_\mu\!\left(x\right) \leq \phi_\mu\!\left(x'\right).}
\end{equation}
This defines a total preorder over the state-space\footnote{This relation does not satisfy antisymmetry and therefore cannot qualify as a partial order.} that aligns directly with the ``volumes'' of the reachable sets. When combined with the definition of an RMC, \eqref{eq:preorder} is a sufficient condition for the existence of a refined canonical form for the dynamics matrix, $\bm{P}$, in finite-state Markov chains~\cite{kemeny:1976:finite}. As illustrated in Figure~\ref{fig:block_decomp}, the transient subspace \highlight{$\SS_\circ$ exhibits an upper-triangular structure} which can be leveraged during value iteration much as in traditional finite-horizon settings, as will be shown in Section~\ref{sec:algorithms}. A formal statement of this result is given below.

\begin{definition}[Finite-State Reductivity]\label{def:frmc}
    A finite-state reductive MC (FRMC) is a Markov chain on $\SS$ with $\card{\SS} = M < \infty$ that satisfies \eqref{eq:rmcs} under the counting measure. A finite-state reductive MDP (FRMDP) is defined analogously.
\end{definition}

\begin{lemma}\label{lem:dag_with_loops}
    The directed graph associated with $\SS_\circ$ of an RMC has self-loops on the states $\mathsf{L}\!\left(\SS\right)$ and no other cycles.
\end{lemma}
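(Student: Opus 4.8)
The plan is to work directly from the drift condition \eqref{eq:rmcs}, reading each directed edge $x \to x'$ of the graph on $\SS_\circ$ as the statement $x' \in \SoR[_1]{x}$, so that reductivity governs the sign of $\DPF[_\mu]{x}{x'}$ along every edge. I would first dispatch the self-loop claim. For a vertex $x \in \SS_\circ$, a self-loop is present precisely when $\p{x, \{x\}} > 0$; but membership in $\SS_\circ$ forbids $\p{x, \{x\}} = 1$, since a unit self-transition would render $\{x\}$ absorbing and indecomposable and hence place $x$ in $\SS_\bullet$. Thus a self-loop in the induced subgraph occurs if and only if $0 < \p{x, \{x\}} < 1$, which is exactly the defining condition of $\mathsf{L}\!\left(\SS\right)$ restricted to $\SS_\circ$. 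Consistency with \eqref{eq:rmcs} is immediate: here $x' = x \in \mathsf{L}\!\left(\{x\}\right)$, so the equality branch applies and $\DPF[_\mu]{x}{x} = 0$, which is why such loops are permitted rather than forbidden.

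The remaining and principal content is the absence of cycles of length at least two. I would argue by contradiction: suppose $x_0 \to x_1 \to \cdots \to x_{k-1} \to x_0$ is a simple cycle in the subgraph on $\SS_\circ$ with $k \geq 2$ and pairwise distinct vertices. Fix any edge $x_i \to x_{i+1}$ (indices read modulo $k$). Since the vertices are distinct we have $x_{i+1} \neq x_i$, so $x_{i+1} \notin \mathsf{L}\!\left(\{x_i\}\right)$ as the latter is a subset of $\{x_i\}$; and since every vertex lies in $\SS_\circ$ we have $x_{i+1} \notin \SS_\bullet$. Hence $x_{i+1}$ falls into the ``otherwise'' branch of \eqref{eq:rmcs}, yielding the strict inequality $\DPF[_\mu]{x_i}{x_{i+1}} < 0$, i.e.\ $\PF[_\mu]{x_{i+1}} < \PF[_\mu]{x_i}$.

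Chaining this strict decrease around the cycle gives $\PF[_\mu]{x_0} > \PF[_\mu]{x_1} > \cdots > \PF[_\mu]{x_{k-1}} > \PF[_\mu]{x_0}$, a contradiction. Therefore no such cycle exists, and combined with the first part the only cycles in the graph on $\SS_\circ$ are the self-loops at $\mathsf{L}\!\left(\SS\right)$, as claimed.

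I would expect the only delicate point to be the bookkeeping at the self-loop, namely verifying that membership in $\SS_\circ$ genuinely excludes the unit-probability self-transition—so that the self-loop condition collapses exactly to $\mathsf{L}\!\left(\SS\right)$—and that a genuine self-loop does not itself count as a forbidden cycle; this is precisely the role of the equality branch of \eqref{eq:rmcs} and the reason the statement singles out $\mathsf{L}\!\left(\SS\right)$. Once the edge-to-$\SoR[_1]{\cdot}$ correspondence is fixed, the remainder is a one-line telescoping of strict potential decreases, exploiting that $\PFsymbol_\mu$ serves as a Lyapunov function that is constant only on self-loops and on $\SS_\bullet$.
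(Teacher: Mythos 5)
Your proof is correct, and it takes a route that is genuinely different from (and in some respects tighter than) the paper's. The paper passes to the induced graph on $\SS_\circ$ and argues via descendant sets: for an FRMC under the counting measure, $D\!\left(x\right) \doteq \SoR{x}\setminus\SoR[_0]{x}$ is the descendant set of $x$; a cycle would force $x \in D\!\left(x'\right)$ and $x' \in D\!\left(x\right)$ for distinct $x, x'$, hence $D\!\left(x\right) = D\!\left(x'\right)$, hence equal potentials, contradicting the strict branch of \eqref{eq:rmcs}. Your telescoping $\PF[_\mu]{x_0} > \PF[_\mu]{x_1} > \cdots > \PF[_\mu]{x_{k-1}} > \PF[_\mu]{x_0}$ reaches the same contradiction directly, and it does so for an arbitrary measure $\mu$, so it proves the lemma at its stated level of generality (``an RMC''), whereas the paper's cardinality argument is written only for FRMCs (Definition~\ref{def:frmc}). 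Two further points in your favour: you prove, rather than stipulate, the self-loop half of the claim --- checking that membership in $\SS_\circ$ rules out $\p{x, \{x\}} = 1$, so the loops that survive are exactly those on $\mathsf{L}\!\left(\SS\right)$ restricted to $\SS_\circ$ --- where the paper simply defines its edge set to exclude self-loops and never verifies this; and you avoid the paper's opening appeal to Theorem~\ref{thm:canonical} for disjointness of $\SS_\circ$ and $\SS_\bullet$, an appeal that looks circular (Theorem~\ref{thm:canonical} is itself proved using this lemma) and is unnecessary since $\SS_\circ \doteq \SS \setminus \SS_\bullet$ is disjoint from $\SS_\bullet$ by definition. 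The only cosmetic addition worth making is the standard remark that any cycle of length at least two contains a simple cycle (or that your chaining tolerates the zero-change self-loop edges), so restricting attention to simple cycles loses nothing.
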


\begin{theorem}[Canonical Form]\label{thm:canonical}
    For any FRMC, there exists a permutation of $\SS$ such that $\bm{P} = \begin{bsmallmatrix} \bm{P}_\circ & \bm{P}_{\circbullet} \\ \bm{0} & \bm{P}_\bullet \end{bsmallmatrix}$, where $\bm{P}_\circ$ is an $\card{\SS_\circ}$-by-$\card{\SS_\circ}$ upper-triangular matrix, $\bm{P}_\bullet$ is an $\card{\SS_\bullet}$-by-$\card{\SS_\bullet}$ block-diagonal matrix, and $\bm{P}_{\circbullet}$ is $\card{\SS_\circ}$-by-$\card{\SS_\bullet}$.
\end{theorem}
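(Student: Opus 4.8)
The plan is to build the required permutation in two stages: a coarse split separating the transient subspace $\SS_\circ$ from the recurrent subspace $\SS_\bullet$, followed by a refinement within each block. Since $\card{\SS} = M < \infty$, every state is either transient or recurrent, and Theorem~\ref{thm:decomp} guarantees that $\SS = \SS_\circ \cup \SS_\bullet$ is a genuine (disjoint) Doeblin decomposition. Listing all states of $\SS_\circ$ before all states of $\SS_\bullet$ immediately yields the claimed $2\times 2$ block partition of $\bm{P}$; it then remains to establish the internal structure of each block.

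First I would dispatch the zero block and the block-diagonal claim for $\bm{P}_\bullet$, both of which follow from the absorbing-and-indecomposable structure of $\SS_\bullet$. By construction $\SS_\bullet = \bigcup \subsof[_\bullet]{\SS}$ is a union of a.i.\ sets, each of which is absorbing; hence for any $x \in \SS_\bullet$ we have $\SoR[_1]{x} \subseteq \SS_\bullet$, so $\p{x, \{x'\}} = 0$ for every $x' \in \SS_\circ$. This gives the lower-left $\bm{0}$ block. For $\bm{P}_\bullet$, I would argue that in the finite setting the a.i.\ sets coincide with the recurrent communicating classes and are therefore pairwise disjoint; ordering the recurrent states by class, and noting that each class is absorbing so that there are no inter-class transitions, produces the block-diagonal form with one diagonal block per element of $\subsof[_\bullet]{\SS}$.

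The crux is the upper-triangular structure of $\bm{P}_\circ$, which I would obtain by using the potential $\PFsymbol_\mu$ as a topological ranking of the transient states. Order $\SS_\circ$ so that $\PF[_\mu]{s_1} \geq \PF[_\mu]{s_2} \geq \cdots$, i.e.\ by non-increasing potential under the preorder $\preccurlyeq_\mu$ of \eqref{eq:preorder}. Consider any off-diagonal entry $(\bm{P}_\circ)_{ij} = \p{s_i, \{s_j\}}$ with $i \neq j$: if it is positive then $s_j \in \SoR[_1]{s_i}$, and since $s_j \neq s_i$ is transient we have $s_j \notin \SS_\bullet \cup \mathsf{L}(\{s_i\})$, so the inequality branch of Definition~\ref{def:rmc} forces $\DPF[_\mu]{s_i}{s_j} < 0$, i.e.\ $\PF[_\mu]{s_j} < \PF[_\mu]{s_i}$. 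Under the chosen ordering this means $j > i$, so every nonzero off-diagonal entry lies strictly above the diagonal and $\bm{P}_\circ$ is upper-triangular (self-loops on $\mathsf{L}(\SS)$ populating the diagonal). This is precisely a topological sort of the directed acyclic graph supplied by Lemma~\ref{lem:dag_with_loops}, with $\PFsymbol_\mu$ serving as the acyclic height function.

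The main obstacle I anticipate is the treatment of ties in potential: the preorder $\preccurlyeq_\mu$ is not antisymmetric, so distinct transient states may share a potential value and the ordering within a level set is not canonical. The resolution is that the strict inequality above rules out any edge between distinct equal-potential transient states---such a transition would require a strict decrease in $\PFsymbol_\mu$---so each level set induces a purely diagonal sub-block and may be ordered arbitrarily without disturbing upper-triangularity. A secondary point requiring care is the pairwise disjointness of the a.i.\ sets needed for the block-diagonal claim, which I would justify from indecomposability together with finiteness by identifying $\subsof[_\bullet]{\SS}$ with the recurrent classes of the chain.
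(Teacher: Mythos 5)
Your proof is correct, and it reaches the canonical form by a genuinely more constructive route than the paper's. The paper's argument is modular: it invokes Theorem~\ref{thm:decomp} for the split $\SS = \SS_\circ \cup \SS_\bullet$, then Lemma~\ref{lem:dag_with_loops} to establish that the transient transition graph minus self-loops is acyclic, and finally cites the abstract fact that any finite DAG admits a strictly upper-triangular adjacency matrix, re-inserting self-loops as diagonal entries; the zero block and the block-diagonality of $\bm{P}_\bullet$ are dispatched in a single sentence (``no special criteria were claimed about $\SS_\bullet$''). You bypass Lemma~\ref{lem:dag_with_loops} entirely and exhibit the permutation explicitly: sort $\SS_\circ$ by non-increasing potential and note that the inequality branch of Definition~\ref{def:rmc} forces every transition between distinct transient states to strictly decrease $\PFsymbol_\mu$, so all nonzero off-diagonal entries land above the diagonal. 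Your treatment of ties---equal-potential transient states admit no edges between one another, so each level set contributes a purely diagonal sub-block---is exactly what is needed to make the sort well-defined given that $\preccurlyeq_\mu$ is only a preorder, a point the paper never has to confront because it delegates to the DAG fact. The two arguments share the same combinatorial core (your ordering is a topological sort, and the paper proves Lemma~\ref{lem:dag_with_loops} by a descendant-counting argument that is the counting-measure potential in disguise), but yours buys an explicit witness for the permutation---essentially the ordering the level-set oracle of Assumption~\ref{ass:level_set} would produce---while the paper's buys the acyclicity lemma as a reusable standalone result. You are also more careful on the remaining blocks, deriving the $\bm{0}$ block from the absorbing property and the block-diagonal form of $\bm{P}_\bullet$ from disjointness of the recurrent classes; the one caveat is that identifying $\subsof[_\bullet]{\SS}$ with the recurrent communicating classes is the intended reading rather than a literal consequence of the paper's definition of a.i.\ sets (which admits non-minimal members), but this wrinkle is inherited from the paper, not introduced by you.
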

\begin{proof}[Proof]
    Theorem~\ref{thm:decomp} allows us to decompose $\SS$ into $\SS_\circ \cup \SS_\bullet$. By Lemma~\ref{lem:dag_with_loops} we can also assert that $\SS_\circ$ corresponds to a directed graph with self-loops: $\mathcal{G} \doteq \left(\SS_\circ, \mathcal{E}\right)$, where $\mathcal{E} \doteq \left\{ (x, x') : x \in \SS_\circ, x'\in\SoR[_1]{x}\setminus\SS_\bullet \right\}$. Since it is well known that all directed acyclic graphs are isomorphic to graphs with strictly upper-triangular adjacency matrices, it follows that the same is true of the graph $\mathcal{G}' \doteq \left(\SS_\circ, \mathcal{E}'\right)$ with $\mathcal{E}' \doteq \mathcal{E} \setminus \left\{ (x, x) : x \in \mathsf{L}\!\left(\SS\right) \right\}$. The removed edges, being only self-loops, then only contribute diagonal entries which yields a non-strict upper-triangular matrix. As no special criteria were claimed about $\SS_\bullet$, we can simply compose the adjacency matrices together to yield the desired result, thus concluding the proof.
\end{proof}

This result suggests that we need only prove that a finite-state chain is reductive in order to assert that the canonical decomposition is satisfied. Indeed the reverse also holds.

\begin{corollary}\label{corr:canon_to_reduct}
    Any finite-state absorbing MC that can be expressed in canonical form is necessarily an FRMC.
\end{corollary}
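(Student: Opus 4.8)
The plan is to prove the statement as a converse to Theorem~\ref{thm:canonical}: where that result turned reductivity into an upper-triangular canonical form, here I would read the same directed-acyclic-graph correspondence backwards, turning the assumed block structure into the drift criterion \eqref{eq:rmcs} under the counting measure. First I would fix the identification suggested by the canonical form. The states indexing $\bm{P}_\circ$ become the candidate transient set $\SS_\circ$ and those indexing $\bm{P}_\bullet$ the candidate $\SS_\bullet$. Since the chain is absorbing, $\SS_\bullet\neq\emptyset$, and I would verify that each diagonal block of $\bm{P}_\bullet$ is a closed, irreducible class---hence absorbing and indecomposable---so that these blocks are exactly the members of $\subsof[_\bullet]{\SS}$ and their union recovers $\SS_\bullet$, with $\SS = \SS_\circ \cup \SS_\bullet$ disjointly.

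With the decomposition in hand, I would establish the two cases of \eqref{eq:rmcs} for the counting-measure potential $\PF[_\mu]{x} = \card{\SoR{x}}$. The universal nesting $\SoR{X_0}\supseteq\SoR{X_1}\supseteq\cdots$ gives $\SoR{x'}\subseteq\SoR{x}$ for every $x'\in\SoR[_1]{x}$, so $\DPF[_\mu]{x}{x'}\leq 0$ always. The equality cases are then immediate: for $x$ inside an a.i.\ block $B$, closedness and irreducibility force $\SoR{x} = B$ for all $x\in B$, so $\PFsymbol_\mu$ is constant on $B$ and any within-block transition satisfies $\DPF[_\mu]{x}{x'} = 0$; likewise a self-loop $x'\in\mathsf{L}\!\left(\{x\}\right)$ gives $\SoR{x'}=\SoR{x}$ and $\DPF[_\mu]{x}{x'}=0$.

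The crux is the strict inequality for every remaining transition, i.e.\ a genuine move $x'\neq x$ out of a transient state $x\in\SS_\circ$. Here I would use upper-triangularity of $\bm{P}_\circ$ together with the zero block beneath it: the former topologically orders $\SS_\circ$ so that edges never decrease the index (and strictly increase it off the diagonal), while the latter forbids any return from $\SS_\bullet$ to $\SS_\circ$. This is precisely the directed-acyclic-graph-with-self-loops structure of Lemma~\ref{lem:dag_with_loops}, and it guarantees there is no directed path from $x'$ back to $x$; hence $x\notin\SoR{x'}$, whereas $x\in\SoR{x}$, so the containment $\SoR{x'}\subsetneq\SoR{x}$ is strict and $\DPF[_\mu]{x}{x'} = \card{\SoR{x'}} - \card{\SoR{x}} < 0$. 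Collecting the three cases yields \eqref{eq:rmcs}, so the chain is an FRMC. I expect the main obstacle to be the translation between the linear-algebraic hypothesis and the set-reachability statement $x\notin\SoR{x'}$---in particular, arguing rigorously that a block-diagonal $\bm{P}_\bullet$ really decomposes into irreducible a.i.\ classes (rather than merely closed but reducible blocks) and that upper-triangularity of $\bm{P}_\circ$ leaves no residual transient cycle. Everything else is a matter of reversing the correspondence already exploited in the proof of Theorem~\ref{thm:canonical}.
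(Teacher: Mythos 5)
Your proposal is correct and takes essentially the same route as the paper: the paper's own proof is a one-line appeal to reversing the construction in Theorem~\ref{thm:canonical}, and your argument simply fills in those details---equality of the potential on a.i.\ blocks (the content of Lemma~\ref{lem:aiae_potential}) and on self-loops, plus strictness for genuine transient moves via the no-return (DAG) structure implied by the upper-triangular $\bm{P}_\circ$ and the zero lower-left block. The ``obstacle'' you flag about reducible blocks dissolves once one notes that the corollary ties $\bm{P}_\bullet$ to the intrinsic partition ($\bm{P}_\bullet$ is $\card{\SS_\bullet}$-by-$\card{\SS_\bullet}$), so its finest diagonal blocks are exactly the a.i.\ classes, which are irreducible by definition.
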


Theorem~\ref{thm:canonical} and Corollary~\ref{corr:canon_to_reduct} provide two angles to attack the same problem: prove the dynamics matrix has canonical form; or that the chain is reductive under the counting measure. As a concrete example, consider the FRMC in Figure~\ref{fig:rmcs:right}. We can verify that this chain is reductive simply from inspection and, indeed, examination of the dynamics matrix only serves to confirm that the canonical form is upheld with $\SS_\circ = \left\{0, 1, 2\right\}$ and $\SS_\bullet = \left\{3, 4\right\}$. For large problems, however, inspecting $\bm{P}$ directly may not be feasible, and finding such a permutation of the induced graph would require a breadth-first search; i.e.\ an operation with polynomial complexity in $\card{\SS}$~\cite{cormen:2009:introduction}. While we leave these specific questions to future work, we remark that proving an MC satisfies \eqref{eq:rmcs} when domain knowledge is available---i.e.\ the settings we address---does not typically require explicit enumeration of a graph; see Section~\ref{sec:optimal_liq}.

\paragraph{General State Spaces}
The analysis thus far has focused on FRMCs as they permit convenient analysis in terms of transition matrices. In the general case, however, we no longer have such a structure and must instead reason about the functional properties of a transition kernel~\cite{klenke:2013:probability}. While it is beyond the scope of this paper to do a full exposition of this, we note that in many cases it is just as easy to verify \eqref{eq:rmcs} in continuous domains as it is for FRMCs. In the \shrinkingintervals problem, for example, we constrain the set of viable kernels such that the support remains bounded as $\supp{X_{i+1}} = [0, X_i)$ for all $i \geq 0$.

\section{Reductive Value Iteration}\label{sec:algorithms}
Many algorithms have been developed over the last 20 years to improve upon the performance of VI in (general) finite-state MDPs; see Section~\ref{sec:introduction:related_work}. However, each of these methods---such as prioritised VI (PVI), topological VI (TVI) and backward VI (BVI)---comes with its own set of limitations~\citep[ch.~3]{kolobov:2012:planning}, with worst-case run-time that is exponential in $\card{\SS}$. In this paper we take an approach that is inspired by the aforementioned algorithms, but leverages the structural properties of RMDPs for improved performance in the presence of non-trivial reductivity.

The proposed variant of VI---named \highlight{reductive VI (RVI)}---proceeds similarly to the traditional implementation for finite-horizon problems:
\begin{enumerate*}[label=(\arabic*)]
    \item solve the a.i.\ subspace $\SS_\bullet$ using your favourite VI algorithm; and
    \item recurse through the potential level-sets, solving each group of transient states through one-step lookaheads.
\end{enumerate*}
This can be seen as a strict generalisation of the reverse VI algorithm of \citet{zang:2007:horizon}, and a special case of backward VI~\cite{dai:2007:prioritizing}; the full process is stated in Algorithm~\ref{alg:rvi}. Indeed, in the case where $r\!\left(x\right) = 0$ for all $x\in\SS_\bullet$ the first part is trivial as $v\!\left(x\right) = 0$ holds uniformly across $\SS_\bullet$. We can also show that the ``self-loop'' recursion associated with forward lookahead used to evaluate $q(x, u)$ in step 9 can be avoided.

Recall that the action-value function $q(x, u)$ is given by an expectation over successor states~\cite{puterman:2014:markov}: $\int_{\SoR[_1]{x}} \left[r(x, u, x') + \gamma v\!\left(x'\right)\right] \, \textup{d}\cp{x'}{x, u}$. By the definition of RMCs, $\SoR[_1]{x}$ must also comprise a set $A$ excluding $x$ or $A \cup \left\{x\right\}$. We may thus \highlight{decompose $q(x, u)$} by separating terms according to the transiency of their implied trajectories. As shown in Lemma~\ref{lem:q_decomp} in the appendix,
\begin{equation}\label{eq:q_decomp}
    \boxed{\begin{aligned}
        q(x, u) = r(x, u)
            &+ \gamma \cdot \beta(x, u) \cdot q_\circ'(x, u) \\
            &+ \gamma \cdot \alpha(x, u) \cdot q_\bullet'(x, u),
    \end{aligned}}
\end{equation}
where, for the event map $E(x, u) \doteq \{X_t = x, U_t = u\}$,
\begin{align*}
    \alpha(x) &\doteq \cp[_\pi]{X_{t+1} = x}{X_t = x} \doteq 1 - \beta(x), \\
    \alpha(x, u) &\doteq \cp[_\pi]{X_{t+1} = x}{E(x, u)} \doteq 1 - \beta(x, u), \\
    v_\circ'(x) &\doteq \mathbb{E}_\pi\!\left[v(X_{t+1}) ~\middle\vert~ X_t = x, X_{t+1} \ne x\right], \\
    q_\circ'(x, u) &\doteq \mathbb{E}_\pi\!\left[v(X_{t+1}) ~\middle\vert~ E(x, u), X_{t+1} \ne x\right], \\
    q_\bullet'(x, u) &\doteq \frac{r(x) + \gamma \cdot \beta(x) \cdot v_\circ'(x)}{1 - \gamma \cdot \alpha(x)},
\end{align*}
and $r(x) \doteq \mathbb{E}_\pi\!\left[R_{t+1} ~\middle\vert~ X_t = x\right]$. While there are many moving parts to \eqref{eq:q_decomp}, the intuition is clear and it can be evaluated efficiently. For example, $q_\bullet'(x, u)$ and $q_\circ'(x, u)$ correspond to the expected future values given that we either repeat or leave $x$, respectively The advantage of the latter expression is that it depends only on known quantities: it is defined explicitly without recursion; hence the quotient form. Finally, we note that when $\alpha(x) = 0$ or $\beta(x) = 0$, you recover the classic expected forms: $r(x, u) + \gamma\cdot q_\circ'(x, u)$, and $r(x, u) + \frac{\gamma}{1 - \gamma} \cdot r(x)$, respectively.

\begin{algorithm}[t]
    \caption{Reductive Value Iteration (RVI)}\label{alg:rvi}

    \begin{algorithmic}[1]
        \STATE \textbf{Input}: Level-set oracle $\mathcal{M}_\phi$ and the absorbing set $\SS_\bullet$.
        \vspace{0.35em}
        \STATE Init $\pi\!\left(x\right)$, $v\!\left(x\right)$, $q\!\left(x, u\right)$ arbitrarily for $(x, u) \in \SS\times\AS$.
        \STATE Solve $\SS_\bullet$ using VI and assign $\SS_\star \gets \SS_\bullet$.
        \vspace{0.35em}
        \WHILE{$\SS_\star \ne \SS$}
            \STATE $A \gets \mathcal{M}_\phi\!\left(\SS\setminus\SS_\star\right)$.
            \vspace{0.35em}
            \FOR{$x \in A$}
                \STATE Update $q\!\left(x, u\right)$ for all $u\in\AS(x)$ using \eqref{eq:q_decomp}.
                \vspace{0.35em}
                \STATE $\pi\!\left(x\right) \gets \argmax_{u\in\AS(x)} q\!\left(x, u\right)$.
                \STATE $v\!\left(x\right) \gets q\!\left(x, \pi\!\left(x\right)\right)$.
            \ENDFOR
            \vspace{0.35em}
            \STATE $\SS_\star \gets \SS_\star \cup A$.
        \ENDWHILE
        \vspace{0.35em}
        \STATE \textbf{Result}: Optimal value functions $(v^\star, q^\star)$ and policy $\pi^\star$.
    \end{algorithmic}
\end{algorithm}

\subsection{Complexity}
There are two key properties of FRMDPs under the constraint of deterministic policies that reduce the complexity of RVI. First, we can replace the integrals in \eqref{eq:q_decomp} with summations. Second, we can remove all expectations w.r.t.\ the policy distribution. This means that $v(x)$ reduces to $q(x, \pi(x))$ where $\pi(x)$ denotes, with some abuse of notation, the only action with positive probability mass. This substantially reduces the complexity of the algorithm, especially in cases where $\AS$ is of large cardinality.

\highlight{Reductive VI is also a single-pass algorithm} in the sense that we only perform a single iteration of the ``inner loop;'' there is no dependence on some convergence criterion for termination, such as an $\varepsilon$ residual bound. Indeed, RVI is very closely related to the classical backward induction algorithm used for FHMDPs, relying crucially on the order in which one performs the one-step lookahead updates; see steps 5 and 11 of Algorithm~\ref{alg:rvi}. To summarise, we show below that under mild computational assumptions on $\PFsymbol$, the procedure underling RVI converges to the optimal value function in polynomial-time.

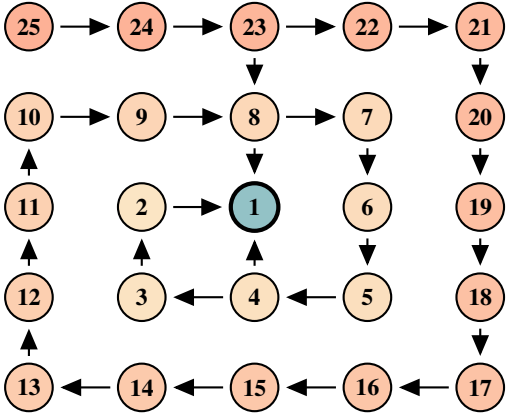
\begin{figure}
    \centering
    \begin{tikzpicture}[thick, yscale=-1]  
        \def\Ex{1.5};
        \def\Ey{1.2};
        \def\CircleSize{18};
        
        \def\Edges{0/0/1/0,
                   1/0/2/0,
                   2/0/3/0, 2/0/2/1,
                   3/0/4/0,
                   4/0/4/1,
                   0/1/1/1,
                   1/1/2/1,
                   2/1/3/1,
                   2/1/2/2,
                   3/1/3/2,
                   4/1/4/2,
                   0/2/0/1,
                   1/2/2/2,
                   3/2/3/3,
                   4/2/4/3,
                   0/3/0/2,
                   1/3/1/2,
                   2/3/1/3, 2/3/2/2,
                   3/3/2/3,
                   4/3/4/4,
                   0/4/0/3,
                   1/4/0/4,
                   2/4/1/4,
                   3/4/2/4,
                   4/4/3/4};
        \def\Potential{0/0/25,
                       1/0/24,
                       2/0/23,
                       3/0/22,
                       4/0/21,
                       0/1/10,
                       1/1/9,
                       2/1/8,
                       3/1/7,
                       4/1/20,
                       0/2/11,
                       1/2/2,
                       3/2/6,
                       4/2/19,
                       0/3/12,
                       1/3/3,
                       2/3/4,
                       3/3/5,
                       4/3/18,
                       0/4/13,
                       1/4/14,
                       2/4/15,
                       3/4/16,
                       4/4/17};
        \def\PotentialColorFactor{4};  
        \foreach \x/\y/\p in \Potential {
            \pgfmathsetmacro\xycolor{\PotentialColorFactor * \p};
            \node[circle, draw=black, fill=transientcolour!\xycolor!convergecolour, inner sep=0pt, minimum size=\CircleSize] (\x\y) at (\Ex * \x, \Ey * \y) {\footnotesize \textbf{\p}};
        }
        \node[circle, ultra thick, draw=black, fill=recurrentcolour, inner sep=0pt, minimum size=\CircleSize] (22) at (\Ex * 2, \Ey * 2) {\footnotesize \textbf{1}};

        \foreach \x/\y/\xx/\yy in \Edges {
            \draw [->, thick, shorten >= 2.5pt, shorten <= 2.5pt] (\x\y) -- (\xx\yy);
        }
    \end{tikzpicture}

    \caption{Spiral chain on a 2-dimensional square lattice---i.e.\ $\SS = \left\{ (x, y) : x \in [4], y \in [4] \right\} \subset \mathbb{Z}^2$---that is finite-state reductive wrt.\ $\SS_\bullet = \left\{ (2, 2) \right\}$ (in {\color{recurrentcolour}blue}). Each node denotes a state, each edge a possible transition, and states are annotated and {\color{transientcolour}co}{\color{transientcolour!90}lo}{\color{transientcolour!80}ur}{\color{transientcolour!70}ed} by their potential $\PF{x} = \left\lvert\SoR{x}\right\rvert = \card{\SoR[_0]{x} \cup \SoR[_1]{x} \cup \cdots}$.}\label{fig:spiral}
\end{figure}

\begin{assumption}[Consistency]\label{ass:consistency}
    The prosets $\{(\SS, \preccurlyeq^\pi_\mu) : \pi\in\Pi\}$ are isomorphic, c.f.~\cite{davey:2002:introduction}, where $\preccurlyeq^\pi_\mu$ denotes the preorder from \eqref{eq:preorder} under a policy $\pi\in\Pi$.
\end{assumption}

\begin{assumption}[Level-Set Oracle]\label{ass:level_set}
    We can compute $\mathcal{M}_\phi\!\left(A\right) \doteq \argmin_{x \in A} \PF{x}$, $A \subseteq \SS$, in $\mathcal{O}(\card{\SS_\circ}\cdot\card{\SS}\cdot\card{\AS})$.
\end{assumption}

\begin{theorem}[RVI]\label{thm:rvi}
    Under Assumptions~\ref{ass:consistency} and~\ref{ass:level_set}, RVI has worst-case complexity $\mathcal{O}(\card{\SS_\circ} \cdot \card{\SS} \cdot \card{\AS}) + \mathcal{O}_\bullet$, where $\mathcal{O}_\bullet$ is the complexity of solving the a.i.\ subspace $\SS_\bullet$. If $r(x, u) = 0$ for all $(x, u)\in\SS_\bullet\times\AS$, then $\mathcal{O}_\bullet = \mathcal{O}(\card{\SS_\bullet})$.
\end{theorem}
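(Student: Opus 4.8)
The plan is to charge the runtime of Algorithm~\ref{alg:rvi} to its two disjoint phases---solving the absorbing subspace (line~3) and the backward sweep over the transient level-sets (the while loop of lines~4--12)---and to argue that the sweep is genuinely single-pass, so that each transient state is touched exactly once. First I would invoke the structural results to certify that a single one-step lookahead per state is sufficient. By Theorem~\ref{thm:canonical} and Lemma~\ref{lem:dag_with_loops}, the transient dynamics on $\SS_\circ$ form a DAG modulo self-loops; equivalently, by the inequality half of \eqref{eq:rmcs}, every transition $x\to x'$ with $x'\neq x$ and $x'\notin\SS_\bullet$ strictly decreases $\PFsymbol_\mu$. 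Hence when the oracle extracts the minimum-potential level-set $A \gets \mathcal{M}_\phi(\SS\setminus\SS_\star)$, all of its successors outside $A$ lie either in $\SS_\bullet$ (solved in line~3) or in strictly-lower level-sets (already added to $\SS_\star$), while intra-level successors are self-loops. Assumption~\ref{ass:consistency} guarantees this ordering is policy-invariant, so the same sweep order remains valid while maximising over $\AS(x)$. This licenses the single evaluation of \eqref{eq:q_decomp} in line~7 and fixes the number of while iterations at the number of transient level-sets, which is at most $\card{\SS_\circ}$ since the level-sets partition $\SS_\circ$.

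With the single-pass structure established, the counting is routine. Evaluating \eqref{eq:q_decomp} for a fixed $(x,u)$ is a one-step lookahead over $\SoR[_1]{x}\subseteq\SS$, costing $\mathcal{O}(\card{\SS})$; crucially, the quotient form of $q_\bullet'$ supplies the self-loop contribution in closed form, so no inner recursion or convergence loop is incurred. Summing line~7 over the $\card{\AS}$ actions gives $\mathcal{O}(\card{\SS}\cdot\card{\AS})$ per state, and the argmax and assignments of lines~8--9 are dominated by $\mathcal{O}(\card{\AS})$. Because every state of $\SS_\circ$ is processed exactly once across all iterations, the cumulative cost of lines~6--10 telescopes to $\sum_{x\in\SS_\circ}\mathcal{O}(\card{\SS}\cdot\card{\AS}) = \mathcal{O}(\card{\SS_\circ}\cdot\card{\SS}\cdot\card{\AS})$.

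It then remains to account for the two remaining contributions. The cumulative cost of the level-set oracle over the whole run is bounded by $\mathcal{O}(\card{\SS_\circ}\cdot\card{\SS}\cdot\card{\AS})$ by Assumption~\ref{ass:level_set}, which is of the same order as the sweep and is therefore absorbed. Solving $\SS_\bullet$ in line~3 contributes exactly $\mathcal{O}_\bullet$ by definition. Adding the three terms yields the stated bound $\mathcal{O}(\card{\SS_\circ}\cdot\card{\SS}\cdot\card{\AS}) + \mathcal{O}_\bullet$. For the special case, if $r(x,u)=0$ on $\SS_\bullet\times\AS$ then---because $\SS_\bullet$ is absorbing and no reward ever accrues after entry (Theorem~\ref{thm:decomp})---we have $v(x)=0$ for every $x\in\SS_\bullet$, so line~3 reduces to writing a constant to $\card{\SS_\bullet}$ entries and $\mathcal{O}_\bullet = \mathcal{O}(\card{\SS_\bullet})$.

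The main obstacle is not the arithmetic but justifying that the inner loop runs once rather than to an $\varepsilon$-residual: I must show that the level-set ordering together with the closed-form handling of self-loops makes a single lookahead exact. The delicate point is the simultaneous use of Assumption~\ref{ass:consistency} (so the sweep order does not depend on the policy being optimised), the strict-decrease condition in \eqref{eq:rmcs} (so every non-self-loop successor is already solved when a state is processed), and the quotient expression for $q_\bullet'$ in \eqref{eq:q_decomp} (so the self-loop does not reintroduce a recursion). If any of these fails, the iteration count is no longer bounded by the number of level-sets and the polynomial bound collapses.
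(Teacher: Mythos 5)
Your proposal is correct and follows essentially the same route as the paper's proof: establish that the level-set ordering (policy-invariant by Assumption~\ref{ass:consistency}) makes every update exact on its first evaluation, then count $\mathcal{O}(\card{\SS}\cdot\card{\AS})$ work per transient state plus the oracle cost under Assumption~\ref{ass:level_set}, absorbing both into $\mathcal{O}(\card{\SS_\circ}\cdot\card{\SS}\cdot\card{\AS})$. If anything, you are more explicit than the paper on two points it glosses over---justifying consistency directly from the strict-decrease half of \eqref{eq:rmcs} rather than via a ``least common ancestors'' remark, and spelling out why $r\equiv 0$ on $\SS_\bullet\times\AS$ forces $v\equiv 0$ there so that $\mathcal{O}_\bullet=\mathcal{O}(\card{\SS_\bullet})$.
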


Though notably more verbose, one can bound the time complexity of RVI even tighter for the case where $r(x, u) = 0 \; \forall \; (x, u)\in\SS_\bullet\times\AS$. Ignoring the $\card{\SS_\bullet}$ updates required to initialise $v(x)$, solving $\SS_\circ$ takes updates on the order of
\begin{displaymath}
    \sum_{x\in\SS_\circ} \card{\SoR[_1]{x}} \cdot \card{\AS\!\left(x\right)} \leq \card{\SS_\circ} \cdot \card{\AS} \cdot \max_{x\in\SS_\circ} \card{\SoR[_1]{x}}.
\end{displaymath}
This level of complexity is \highlight{very similar to that of breadth-first search} and, indeed, RVI can be seen as an application thereof. Furthermore, since $\max_{x\in\SS_\circ} \card{\SoR[_1]{x}}$ is typically less than $\card{\SS_\circ}$, this relationship confirms that \highlight{RVI will perform particularly well in FRMDPs with low out-degree}; i.e.\ problems where the transition digraph is deep, not wide.

\paragraph{Why follow the potential?}
Consider the FRMC in Figure~\ref{fig:spiral} and augment it with a set of controls $\AS \doteq [0, 1]$, where an action $u \in \AS$ denotes the probability that the agent takes the shorter path when available. For example, in state $(2, 4)$, an action $u = 0$ (resp.\ 1) would move the agent east (south), and any value $u\in(0, 1)$ would mix between these transitions. Applying RVI to this problem reveals some subtleties to the construction. If we were to na\"{i}vely recurse up the chain according to $\mathsf{G}_1$ (see \eqref{eq:g_map}) as in BVI, we would immediately attempt to solve the set $\left\{ (2, 1), (1, 2), (2, 3) \right\}$. However, states $(2, 1)$ and $(2, 3)$ are not yet directly soluble since there are as-yet unsolved paths. In other words, at least one further iteration is required before the true values are reached. The same is true for state $(2, 4)$. This highlights a key advantage of RVI compared with backward VI.

\section{Case Study: Optimal Liquidation}\label{sec:optimal_liq}
Consider a discrete optimal liquidation problem~\cite{almgren:2001:optimal} where the state-space is given by a combination of the agent's inventory $\left\{Q_n : n \geq 0\right\}$ (how much stock is held) and the current market price $\left\{Z_n : n \geq 0\right\}$. For tractability, we constrain ourselves to the finite setting and let $\SS \doteq [0, \overline{q}] \times [\underline{z}, \overline{z}]$ with $0 < \overline{q} < \infty$ and $0 < \underline{z} < \overline{z} < \infty$. The action-space is $\AS \doteq \mathbb{Z}_+$, and we apply state-dependent masking to enforce reductivity: $\AS\!\left(q\right) \doteq \left\{q, q-1, \dots, 1\right\}$ for all $q > 0$, and $\AS\!\left(0\right) \doteq \left\{0\right\}$. The agent's reward is constructed from three mutually dependent objectives:
\begin{displaymath}
    r_{\bm{w}}\!\left(q, z, u\right) = \underbrace{w_0 \cdot u \cdot (z - z_0)}_{\textrm{Excess PnL}} - \underbrace{w_1 \cdot u^2}_{\textrm{Transaction Cost}} - \underbrace{w_2 \cdot q^2}_{\textrm{Inventory Penalty}}
\end{displaymath}
where $\bm{w} \in \Re^3_+$ is to be chosen, and $Z_0 \doteq z_0$ is a fixed initial price. The first term is the excess profit from selling holdings at a time $n \geq 0$; the second is a proxy for the cost of transacting;\footnote{This approximation is used for demonstration. See \cite{abergel:2016:limit} or \cite{cartea:2015:algorithmic} for more realistic alternatives.} and the third is a risk penalty associated with the agent's inventory designed to encourage immediacy.

\paragraph{Transition Dynamics}
At each time, the agent selects an action $0 < U_n \leq Q_n$ and the inventory decreases as $Q_n \mapsto Q_n - U_n$.\footnote{One can incorporate a fill-probability model here, just as long as $Q_{n+1} - Q_n > 0$ holds a.s.\ for all $n \geq 0$ where $Q_n > 0$.} The price dynamics are modelled independently as a trinomial random walk with two-sided reflection~\cite{andersen:2015:levy} and probability vector $\bm{p}_Z\in\Delta^2$; see Figure~\ref{fig:optimal_exec} in the appendix for an illustration. That is, for a driving sequence $\left\{Y_n : n\geq 0\right\}$, $Y_{n+1} \doteq \min\!\left\{ \overline{z} - \underline{z}, \max\!\left\{ 0, Y_n \right\} \right\}$, where $Y_0 \doteq y \in [0, \overline{z} - \underline{z}]$, we define the price at time $n \geq 0$ as $Z_n \doteq \underline{z} + Y_n$.

\paragraph{Level-Set Oracle}
The dynamics outlined above give rise to an FRMDP for which the potential function is non-trivial to express. The supremum of $\PFsymbol$ across prices for a given inventory, $q > 0$, is not bounded by the infimum at $q+1$, prohibiting efficient oracles. A simple (technical) solution is to assume that the random walk has an infinitesimal, but non-zero probability $\varepsilon$ of large price increments (i.e.\ non-trinomial) such that $\SoR{q, z} = \left\{ (q', z') \in \SS : 0 \leq q' < \overline{q}, z \in [\underline{z}, \overline{z}] \right\}$. There are two advantages to this trick: (a) the reachable sets are rectangular, leading to an efficient implementation of RVI that recurses up the tree from $1 \to \overline{q}$, sweeping across the price grid $[a, b]$ at each step (see Figure~\ref{fig:optimal_exec}); and (b) we are free to choose $\varepsilon$ sufficiently small such that any price increments not in $\left\{-1, 0, 1\right\}$ are negligible.

\begin{figure}[t]
    \centering
    \includegraphics[%
        width=0.8\linewidth,%
        trim={0 0.5em 0 0},%
        clip%
    ]{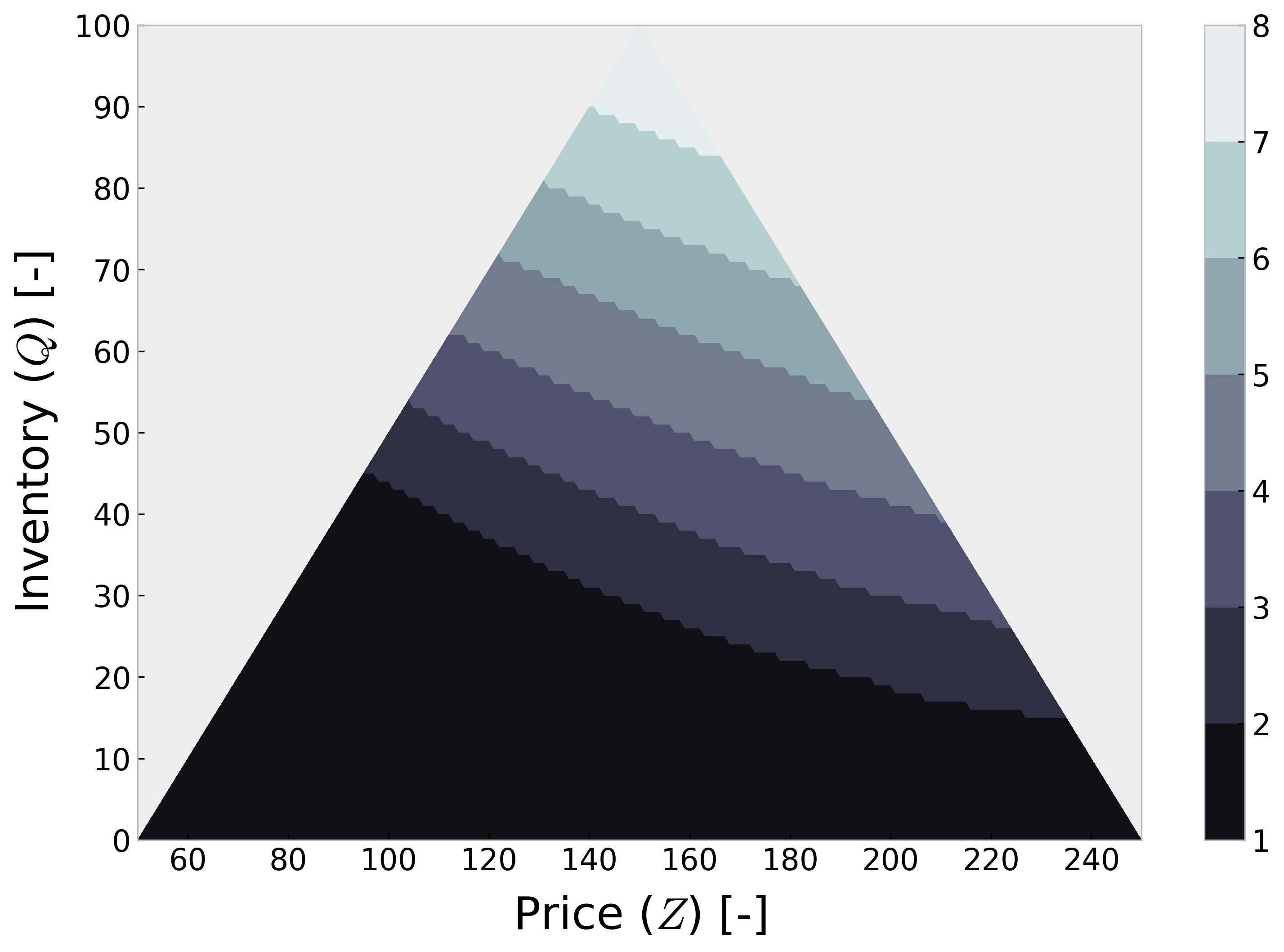}

    \caption{Optimal liquidation policy as a function of state. The $z$-axis denotes the quantity to be sold and masking is applied to the null sets on price evolutions given $Z_0 \doteq 150$.}\label{fig:policy}
\end{figure}

\subsubsection*{Numerical Results}
The first objective was to better understand the nature of optimal liquidation strategies in the discrete setting. To this end, we applied RVI with an initialisation of $Q_0 \doteq \overline{q} \doteq 100$ and $Z_0 \doteq 150 \in \left[50, 250\right]$, price dynamics $\bm{p}_Z \doteq \left[0.4, 0.2, 0.4\right]$, and reward parameters $\bm{w} \doteq \left[1, 0.2, 0.002\right]$. The optimal policy associated with this configuration is illustrated in Figure~\ref{fig:policy}. Note that the smoothness one typically observes in continuous settings~\cite{cartea:2015:algorithmic} is replaced here by a series of level-sets whose boundaries are non-linear in $\SS$, with a bias towards selling more at higher prices. In contrast, the optimal liquidation paths generated by such policies as a function of $w_1$ are extremely well-behaved; see Figure~\ref{fig:paths}. As one might imagine, increasing the transaction penalty leads to a decrease in the aggressiveness of the strategy and a commensurate increase in the length of the schedule.

\begin{figure}[t]
    \centering
    \includegraphics[%
        width=0.95\linewidth,%
        trim={0 0.5em 0 0},%
        clip%
    ]{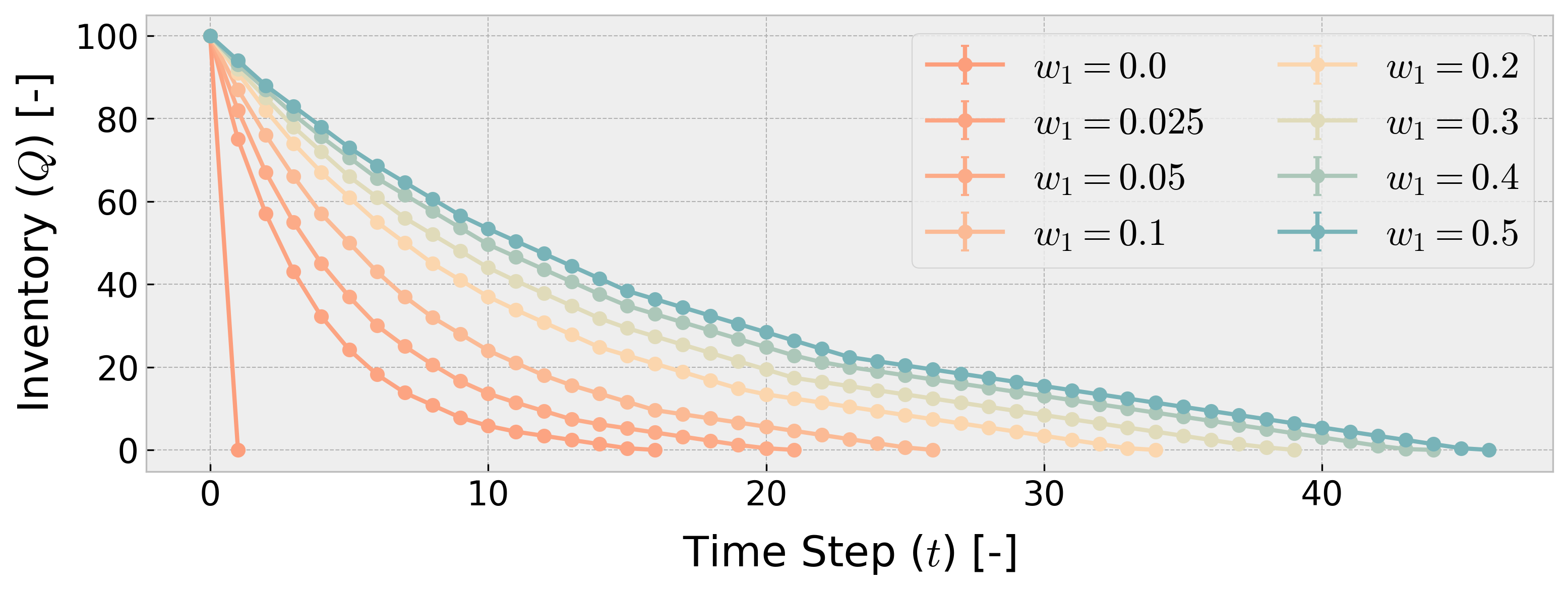}

    \caption{Optimal liquidation trajectories over time as a function of the transaction cost weight, $w_1$. Each point denotes a sample mean, with errors/variance omitted as they are too small to visualise.}\label{fig:paths}
\end{figure}

The second objective was to compare the performance of RVI to existing approaches, of which we consider:
\begin{description}[itemsep=0em]
    \item[QVI] A variant of traditional VI in which the order of processing inventory levels is randomised or the worst-case of recursing from $\overline{q} \to 1$ (i.e.\ reversed RVI).
    \item[BVI] The Backward VI algorithm of \citet{dai:2007:prioritizing}.
\end{description}
The time-complexities shown in Figure~\ref{fig:time_complexity} suggest that \highlight{RVI can achieve as much as a two order of magnitude reduction in elapsed time} to find the optimal policy across values of $\overline{q}$. This results from the fact that RVI is a one-pass algorithm, where other methods may require up to an exponential-in-$\overline{q}$ number of iterations. Interestingly, while BVI essentially performs the same sequence of updates as RVI, the cost of explicitly managing a queue leads to a significant increase in the per-iteration complexity.

\begin{figure}
    \centering
    \includegraphics[%
        width=0.95\linewidth,%
        trim={0 0.5em 0 0},%
        clip%
    ]{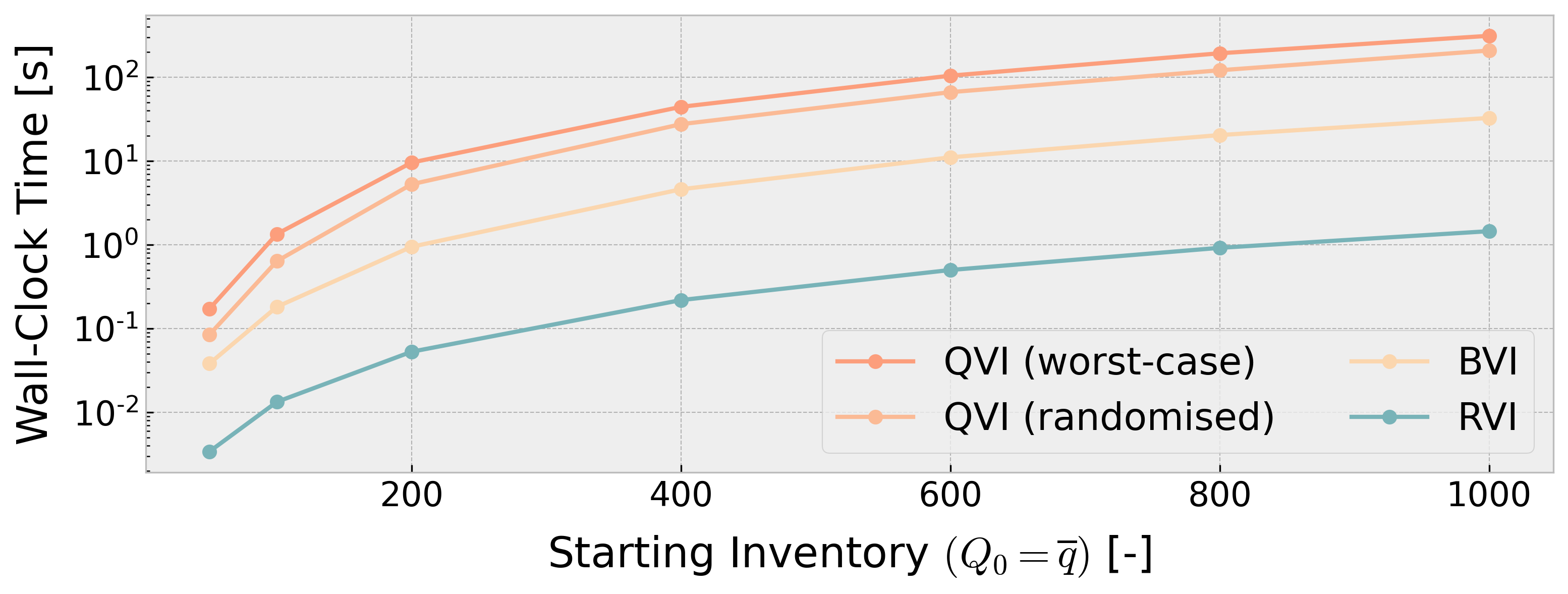}

    \caption{Mean time-complexity of QVI, BVI and RVI on the optimal liquidation domain with $\overline{q} \doteq 100$, $\underline{z} \doteq 40$ and $\overline{z} \doteq 260$.}\label{fig:time_complexity}
\end{figure}

\section{Conclusions and Future Directions}
In this paper we have proposed and analysed a drift condition for Markov chains that induces monotonic and acyclic transition dynamics, generalising finite-horizon problems. It was then shown that this behaviour allows us to extend backwards induction to a much richer class of problems, including finite-resource and optimal routing settings. Below we highlight some particularly interesting future directions.

\paragraph{Verifying Reductivity.}
It is clear from Lemma~\ref{lem:dag_with_loops} that tests for acyclicity of a DAG could be used to identify reductivity of a finite-state MC. Can these algorithms be generalised to arbitrary state-spaces? And indeed, is it possible to design more efficient algorithms for verifying this property in large cardinality spaces, or even in MDPs subject to Assumption~\ref{ass:consistency}? These would likely be hard problems since, for each candidate measure, one would need to evaluate many sub-problems, each of which may be of similar complexity to that of \textsf{EOPL}~\cite{fearnley:2020:unique}.

\paragraph{Weak Reductivity.}
The strict conditions on reductive processes yield significant performance benefits in terms of time-complexity. However, we conjecture that allowing a ``small number of off-upper-triangular terms'' may yield similar benefits by approximating the original process as an RMC. It would be of great interest to establish if this holds, under what conditions, and with what guarantees on bias.

\paragraph{Renewal Reductivity.}
Is it possible to handle renewal points in the context of resource depletion problems? For example, there may be points where a car can refuel in transportation~\cite{chen:2015:stochastic}. For this, one could appeal to renewal theory of Markov chains~\cite{meyn:2012:markov}. This might also offer a route towards identifying the ties between reductive processes and queuing systems.

\section*{Disclaimer}
This paper was prepared for informational purposes by the Artificial Intelligence Research group of JPMorgan Chase \& Co and its affiliates (``J.P.\ Morgan''), and is not a product of the Research Department of J.P.\ Morgan. J.P.\ Morgan makes no representation and warranty whatsoever and disclaims all liability, for the completeness, accuracy or reliability of the information contained herein. This document is not intended as investment research or investment advice, or a recommendation, offer or solicitation for the purchase or sale of any security, financial instrument, financial product or service, or to be used in any way for evaluating the merits of participating in any transaction, and shall not constitute a solicitation under any jurisdiction or to any person, if such solicitation under such jurisdiction or to such person would be unlawful.

\copyright{} 2022 JPMorgan Chase \& Co. All rights reserved.

\bibliography{main}
\bibliographystyle{icml2022}

\clearpage
\appendix
\onecolumn
\section{Some Observations}
\subsection{Level-Set Oracles}
In the black-box case, where a generative model yields $(x, u, x')$ tuples,
finding reachable sets (RSs) is essentially undecidable for general MDPs.
Unless all $(x, x')$ pairs are observed (in which case the RS is $\mathsf{X}$), one can never be sure that the unobserved transitions are just very unlikely. One must instead resort to probabilistic bounds which are beyond the scope of this particular paper, though we conjecture that reductivity would yield tighter results. In the standard setting, where we have access to the transition matrix, $\bm{P}$, there exist efficient algos if $\mathsf{X}_\star$ is given as input to $\mathcal{M}_\phi$. As $\bm{P}$ is upper-triangular in RMDPs, it is sufficient to perform a search over predecessor states from all $x\in\mathsf{X}_\star$ and take those with lowest outdegree that are not yet in $\mathsf{X}_\star$, yielding an algorithm of (very) conservative worst-case complexity of $\mathcal{O}\left(\left\lvert \mathsf{X}_\circ \right\rvert \cdot \left\lvert \mathsf{X} \right\rvert\right)$. This clearly satisfies Assumption 2, and even suggests that tighter bounds are possible. Of course, in many practical applications, such as the liquidation problem, one can implement a much faster oracle.

\section{Theory and Proofs}
\subsection{Supplementary Lemmas}
\begin{lemma}\label{lem:aiae_potential}
    In all a.i.\ subsets $B \in \subsof[_\bullet]{\SS}$ we have that $\SoR[_n]{x} = B \; \forall \, x\in B, n > 0$, and the potential $\PF[_\mu]{x} = \PF[_\mu]{B}$ is uniform across $B$.
\end{lemma}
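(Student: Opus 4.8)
The plan is to prove the set identity $\SoR[_n]{x} = B$ first and then read off the uniform-potential statement as an immediate corollary. I would split the set identity into two inclusions, using absorption for one direction and indecomposability for the other. For $\SoR[_n]{x} \subseteq B$, I appeal directly to the absorbing property: since $B \in \subsof[_\bullet]{\SS}$, a chain started anywhere in $B$ remains in $B$ almost surely, so $\p[_n]{x, B} = 1$ for every $x \in B$ and every $n > 0$. Thus $B$ lies in the defining collection $\{A \in \subsof{\SS} : \p[_n]{x, A} = 1\}$ of \eqref{eq:reachable_states:n}, and as $\SoR[_n]{x}$ is the intersection of that collection we get $\SoR[_n]{x} \subseteq B$ at once; taking the union over $n$ yields $\SoR{x} \subseteq B$.

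For the reverse inclusion I would exploit that $\SoR{x}$ is itself an absorbing subset of $B$. Indeed, the reachable sets form an almost surely monotone sequence $\SoR{X_0} \supseteq \SoR{X_1} \supseteq \cdots$, so for any $y \in \SoR{x}$ we have $\SoR[_1]{y} \subseteq \SoR{y} \subseteq \SoR{x}$ and hence $\p[_1]{y, \SoR{x}} = 1$; that is, the chain cannot leave $\SoR{x}$. Since $x \in \SoR[_0]{x} \subseteq \SoR{x}$, this absorbing subset is non-empty. Indecomposability of $B$—understood, as in the Doeblin decomposition, to mean that $B$ carries no inessential states and admits no proper non-empty absorbing subset—then forces any such subset to exhaust $B$, giving $\SoR{x} = B$ for every $x \in B$. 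To upgrade this to the per-step claim $\SoR[_n]{x} = B$ I would invoke the retention property $x \in \SoR[_1]{x}$, which holds because a.i. states here possess self-loops (cf.\ the equality condition of Definition~\ref{def:rmc} and the surrounding remark that $\p{x, \{x\}} = 1$ implies membership in $\SS_\bullet$). Retention means reachability accumulates monotonically across steps, so $B \subseteq \SoR{x}$ propagates to $B \subseteq \SoR[_n]{x}$ for each fixed $n$ by induction, and combined with the first inclusion this closes the identity.

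Finally, the potential statement is immediate: with $\SoR{x} = B$ for all $x \in B$, definition \eqref{eq:potential} gives $\PF[_\mu]{x} = \mu \circ \SoR{x} = \mu(B) = \PF[_\mu]{B}$, a constant independent of the base point, so $\PFsymbol_\mu$ is uniform across $B$ for any choice of measure $\mu$.

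I expect the main obstacle to be the reverse inclusion, and specifically the per-step refinement $\SoR[_n]{x} = B$ rather than merely $\SoR{x} = B$. The subtle issue is periodicity: without the self-loops characterising a.i.\ sets in this setting, a one-step set $\SoR[_1]{x}$ can be a strict subset of $B$ (a strictly alternating two-cycle is the cautionary example), so the argument genuinely needs the retention property $x \in \SoR[_1]{x}$ to drive the induction. A secondary point requiring care is the precise reading of indecomposability: the clean conclusion relies on $B$ being a minimal absorbing (irreducible) kernel with no inessential interior states, which is why those states are relegated to $\SS_\circ$ by the decomposition of Theorem~\ref{thm:decomp} rather than being permitted inside $B$.
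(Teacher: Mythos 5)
Your first two steps are correct and are, in essence, the paper's own proof carried out with more care: absorption places $B$ in the defining collection of \eqref{eq:reachable_states:n}, giving $\SoR[_n]{x} \subseteq B$ and hence $\SoR{x} \subseteq B$; indecomposability---which both you and the paper must read in the strong form ``$B$ admits no proper non-empty absorbing subset'' (the paper phrases this as no path becoming permanently restricted to a strict subset of $B$)---then forces the non-empty absorbing set $\SoR{x}$ to exhaust $B$, and the uniform potential follows from \eqref{eq:potential}. Up to the union level, the two arguments coincide.

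The gap is in your per-step refinement, and it cannot be repaired. The retention property $x \in \SoR[_1]{x}$ is obtained by reading the paper's remark backwards: the remark asserts that $\p{x, \{x\}} = 1$ implies $x$ is a.i.\ (hence lies in $\SS_\bullet$), not that states of an a.i.\ set carry self-loops---and in general they do not. Your own cautionary example is admissible in this framework: a deterministic two-cycle on $B = \{a, b\}$ is absorbing and contains no disjoint pair of absorbing subsets, so $B \in \subsof[_\bullet]{\SS}$, and the chain is $\mu$-reductive (vacuously, with $\SS_\bullet = \SS = B$, since $\DPF[_\mu]{a}{b} = \mu\!\left(B\right) - \mu\!\left(B\right) = 0$ satisfies \eqref{eq:rmcs}); nothing in Definition~\ref{def:rmc}, the remark following it, or Theorem~\ref{thm:decomp} excludes periodic a.i.\ sets. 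For this chain $\SoR[_1]{a} = \{b\} \ne B$, so the claim $\SoR[_n]{x} = B$ for every $n > 0$ is false as stated and no argument can close it. Moreover, even granting retention, your induction overreaches: monotone accumulation only yields $\SoR[_n]{x} = B$ for all sufficiently large $n$ (in the finite case), not for every $n > 0$---a three-cycle with self-loops at each state has $\SoR[_1]{x} \subsetneq B$. What your difficulty actually exposes is that the lemma is overstated: the paper's own proof never addresses fixed $n$ either (it establishes only $\SoR{x} = B$), and only that union-level identity is used downstream, e.g.\ in Proposition~\ref{prop:completeness}, where the conclusion $\SoR{x} = C$ suffices. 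Your instinct that periodicity is the obstruction was exactly right; the error was concluding that the setting rules out periodic a.i.\ sets, rather than that the per-step claim should be weakened to the union form.
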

\begin{proof}
    Consider an a.i.\ set $B$. By the absorbing property, no trajectory of the chain can leave $B$ once it enters. Further, indecomposability means that no path can become permanently restricted to a strict subset $C \subset B$. These properties together imply that the reachable state set is necessarily equal to $B$ uniformly across $B$; that is, $\SoR{x} = B$ for all $x \in B$. The potential must therefore be the same across all such states, concluding the proof.
\end{proof}

\begin{lemma}\label{lem:q_decomp}
    For any MDP with discount factor $\gamma \in [0, 1)$, the action-value function decomposes as \eqref{eq:q_decomp}. That is,
    \begin{align*}
        q(x, u) = r(x, u) + \mhighlight[transientcolour!20]{\gamma\cdot\beta(x, u)\cdot q_\circ'(x, u)} + \mhighlight[recurrentcolour!20]{\gamma\cdot\alpha(x, u)\cdot q_\bullet'(x, u)}
    \end{align*}
    where, for $r(x) \doteq \mathbb{E}_\pi\!\left[R_{t+1} ~\middle\vert~ X_t = x\right]$,
    \begin{align*}
        \alpha(x) &\doteq \cp[_\pi]{X_{t+1} = x}{X_t = x}, \\
        \beta(x) &\doteq \cp[_\pi]{X_{t+1} \ne x}{X_t = x}, \\[0.5em]
        \alpha(x, u) &\doteq \cp[_\pi]{X_{t+1} = x}{X_t = x, U_t = u}, \\
        \beta(x, u) &\doteq \cp[_\pi]{X_{t+1} \ne x}{X_t = x, U_t = u}, \\[0.5em]
        v_\circ'(x) &\doteq \mathbb{E}_\pi\!\left[v(X_{t+1}) ~\middle\vert~ X_t = x, X_{t+1} \ne x\right], \\
        q_\circ'(x, u) &\doteq \mathbb{E}_\pi\!\left[v(X_{t+1}) ~\middle\vert~ X_t = x, U_t = u, X_{t+1} \ne x\right], \\[0.5em]
        q_\bullet'(x, u) &\doteq \mathbb{E}_\pi\!\left[v(x) ~\middle\vert~ X_t = x, U_t = u, X_{t+1} = x\right] = \frac{r(x) + \gamma \cdot \beta(x) \cdot v_\circ'(x)}{1 - \gamma \cdot \alpha(x)},
    \end{align*}
\end{lemma}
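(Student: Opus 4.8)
The plan is to derive both the two-term split and the non-recursive form of the self-loop term from a single device applied twice: conditioning the appropriate Bellman expansion on the self-loop event $\{X_{t+1} = x\}$.

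First I would expand $q(x,u) = r(x,u) + \gamma\,\mathbb{E}_\pi[v(X_{t+1}) \mid E(x,u)]$ and partition the inner expectation by the complementary events $\{X_{t+1}=x\}$ and $\{X_{t+1}\ne x\}$. The law of total expectation then produces the weights $\alpha(x,u)$ and $\beta(x,u)$, which are exactly the self-loop and escape probabilities. On the escape event the conditional expectation is $q_\circ'(x,u)$ by definition, while on the self-loop event $v(X_{t+1}) = v(x)$, so that conditional expectation collapses to $v(x)$ --- the defining form of $q_\bullet'(x,u)$. Substituting gives the boxed decomposition \eqref{eq:q_decomp} directly.

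The only substantive step is the closed form for the self-loop term. Here I would apply the same partition to the value recursion $v(x) = r(x) + \gamma\,\mathbb{E}_\pi[v(X_{t+1}) \mid X_t = x]$, again using $v(X_{t+1}) = v(x)$ on the self-loop event, to obtain
\[
    v(x) = r(x) + \gamma\,\alpha(x)\,v(x) + \gamma\,\beta(x)\,v_\circ'(x).
\]
Crucially, $v(x)$ appears on both sides --- this is the self-reference induced by the self-loop. Collecting the $v(x)$ terms and dividing by $1 - \gamma\,\alpha(x)$ isolates $v(x)$ and yields the quotient expression claimed for $q_\bullet'(x,u)$.

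The main point to justify --- and the reason the quotient is the payoff of the lemma --- is that this algebraic resolution is valid, i.e.\ that $1 - \gamma\,\alpha(x) \ne 0$. This is immediate from the hypothesis $\gamma \in [0,1)$, since $\gamma\,\alpha(x) \le \gamma < 1$ for any $\alpha(x) \in [0,1]$; no reductivity is needed at this stage. I would also remark that $q_\bullet'(x,u)$ does not in fact depend on $u$: conditioning on a return to $x$ erases the action's influence on the successor value. This is precisely why the closed form is written in the policy-averaged quantities $\alpha(x)$, $\beta(x)$, $r(x)$ and $v_\circ'(x)$ rather than their action-indexed analogues, and it is what lets RVI evaluate the self-loop contribution without recursion.
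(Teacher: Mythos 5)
Your proof is correct and follows essentially the same route as the paper's: partition the Bellman expectation on the self-loop event $\{X_{t+1} = x\}$ to obtain the $\alpha,\beta$-weighted split (with $q_\bullet'(x,u)$ collapsing to $v(x)$), then resolve the resulting linear equation in $v(x)$ to get the quotient $\bigl(r(x) + \gamma\cdot\beta(x)\cdot v_\circ'(x)\bigr)/\bigl(1 - \gamma\cdot\alpha(x)\bigr)$, valid since $\gamma < 1$. The only difference is cosmetic: you keep the expected reward outside the conditional split from the start, whereas the paper splits $R_{t+1}$ into loop/escape components ($r_\bullet$, etc.) and later recombines them into $r(x)$, a detour your version avoids.
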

\begin{proof}
    The proof follows by considering conditional expectations of mutually disjoint events, as described below.

    \paragraph{Step 1 (Decomposition)}
    We begin by decomposing the state- and action-value functions into two terms: one reflecting the value associated with leaving the state, and the other capturing the value associated with ``self-looping.'' To this end, we first define
    \begin{align*}
        v_\circ(x) &\doteq \mathbb{E}_\pi\!\left[R_{t+1} + \gamma \cdot v(X_{t+1}) ~\middle\vert~ X_t = x, X_{t+1} \ne x\right], \\
        v_\bullet(x) &\doteq \mathbb{E}_\pi\!\left[R_{t+1} + \gamma \cdot v(x) ~\middle\vert~ X_t = x, X_{t+1} = x\right],
    \end{align*}
    and analogously
    \begin{align*}
        q_\circ(x, u) &\doteq \mathbb{E}_\pi\!\left[R_{t+1} + \gamma \cdot v(X_{t+1}) ~\middle\vert~ X_t = x, U_t = u, X_{t+1} \ne x\right], \\
        q_\bullet(x, u) &\doteq \mathbb{E}_\pi\!\left[R_{t+1} + \gamma \cdot v(x) ~\middle\vert~ X_t = x, U_t = u, X_{t+1} = x\right].
    \end{align*}
    From this we can now fully decompose the value functions according to conditional expectation as follows:
    \begin{displaymath}
        v(x) = \underbrace{\mhighlight[transientcolour!20]{\beta(x) \cdot v_\circ(x)}}_{\text{``Transient'' Component}} + \underbrace{\mhighlight[recurrentcolour!20]{\alpha(x) \cdot v_\bullet(x)}}_{\text{``Loop'' Component}},
    \end{displaymath}
    and, for $r(x, u) \doteq \mathbb{E}_\pi\!\left[R_{t+1} ~\middle\vert~ X_t = x, U_t = u\right]$,
    \begin{displaymath}
        q(x, u) = \underbrace{\mhighlight[transientcolour!20]{\beta(x, u) \cdot q_\circ(x, u)}}_{\text{``Transient'' Component}} + \underbrace{\mhighlight[recurrentcolour!20]{\alpha(x, u) \cdot q_\bullet(x, u)}}_{\text{``Loop'' Component}} = r(x, u) + \mhighlight[transientcolour!20]{\beta(x, u) \cdot q_\circ'(x, u)} + \mhighlight[recurrentcolour!20]{\alpha(x, u) \cdot q_\bullet'(x, u)}.
    \end{displaymath}

    \paragraph{Step 2 (Unravel $v(x)$)}
    We want an expression for $q(x, u)$ that can be computed directly. However, $q_\bullet$ and thus $q_\bullet'$ include terms that depend on future iterations occurring at the origin state: namely, $v_\bullet(x)$. To resolve this, we expand $v(x)$ into expressions that can be computed immediately, and those that are recursive:
    \begin{align*}
        v(x)
            &= \mhighlight[transientcolour!20]{\beta(x) \cdot v_\circ(x)} + \mhighlight[recurrentcolour!20]{\alpha(x) \cdot v_\bullet(x)}, \\
            &= \mhighlight[transientcolour!20]{\beta(x) \cdot v_\circ(x)} + \mhighlight[recurrentcolour!20]{\alpha(x) \cdot \mathbb{E}_\pi\!\left[R_{t+1} + \gamma \cdot v(x) ~\middle\vert~ X_t = x, X_{t+1} = x\right]}, \\
            &= \mhighlight[transientcolour!20]{\beta(x) \cdot v_\circ(x)} + \mhighlight[recurrentcolour!20]{\alpha(x) \cdot \left[r_\bullet(x) + \gamma\cdot v(x)\right]},
    \end{align*}
    where $r_\bullet(x) \doteq \mathbb{E}_\pi\!\left[R_{t+1} ~\middle\vert~ X_t = x, X_{t+1} = x\right]$. To simplify further, we recall that $v_\circ(x)$ includes a reward term that is a complement of $r_\bullet(x)$, and thus we can undo the decomposition on the immediate reward to give
    \begin{displaymath}
        v(x) = r(x) + \mhighlight[transientcolour!20]{\gamma\cdot\beta(x)\cdot v_\circ'(x)} + \mhighlight[recurrentcolour!20]{\gamma\cdot\alpha(x)\cdot v(x)}
    \end{displaymath}
    where $v_\circ'(x)$ was defined previously. Now, with a little manipulation, we can show that
    \begin{displaymath}
        \boxed{v(x) = \frac{r(x) + \mhighlight[transientcolour!20]{\gamma\cdot\beta(x)\cdot v_\circ'(x)}}{\mhighlight[recurrentcolour!20]{1 - \gamma\cdot\alpha(x)}}.}
    \end{displaymath}
    Note that this expression for $v(x)$ is no longer recursive, and depends only on those values that can be computed directly from known quantities. Observe also that this quotient is always valid since $\gamma \in [0, 1)$. Indeed, even for $\gamma = 1$, the value $v(x)$ will remain finite as long as the self-loop probability $\alpha(x) < 1$.

    \paragraph{Step 3 (Conclusion)}
    To conclude the proof, we simply recall the decomposition of $q(x, u)$ and replace $q_\bullet'(x, u) = v(x)$ with the (now directly computable) expression derived in the previous step.
\end{proof}

\subsection{Proposition~\ref{prop:completeness}: Completeness}\label{sec:proofs:completeness}
The proof is constructed via a set of contradictions that cover all possible instantiations of the successor set that are not equal to $\SS_\bullet$. The various cases are illustrated in Figure~\ref{fig:completeness} for ease of exposition.

\begin{figure*}
    \centering

    \begin{subfigure}[t]{0.19\textwidth}
        \centering
        \begin{tikzpicture}
            \draw[draw=recurrentcolour, fill=recurrentcolour!20, thick] (1.5cm,0) rectangle (3.0cm,3.0cm);
            \draw[draw=transientcolour, fill=transientcolour!20, thick] (0,0) rectangle (1.5cm,3.0cm);
            \draw (0.35cm,0.25cm) node {$\SS_\bullet$};
            \draw (2.7cm,0.25cm) node {$\SS_\circ$};
            \draw (0.75cm,1.5cm) node {$B$};
        \end{tikzpicture}

        \caption{$B = \SS_\bullet$}\label{fig:completeness:a}
    \end{subfigure}
    \hfill
    \begin{subfigure}[t]{0.19\textwidth}
        \centering
        \begin{tikzpicture}
            \draw[draw=recurrentcolour, fill=recurrentcolour!20, thick] (0,0) rectangle (1.5cm,3.0cm);
            \draw (0.35cm,0.25cm) node {$\SS_\bullet$};
            \draw[draw=recurrentcolour, fill=recurrentcolour!20, thick] (1.5cm,0) rectangle (3.0cm,3.0cm);
            \draw (2.7cm,0.25cm) node {$\SS_\circ$};
            \draw[fill=transientcolour!20, draw=transientcolour, thick] (0,1.5) rectangle (1.5cm,3.0cm);
            \draw (0.75cm,2.25cm) node {$B$};
        \end{tikzpicture}

        \caption{$B \subset \SS_\bullet$}\label{fig:completeness:b}
    \end{subfigure}
    \hfill
    \begin{subfigure}[t]{0.19\textwidth}
        \centering
        \begin{tikzpicture}
            \draw[draw=recurrentcolour, fill=recurrentcolour!20, thick] (0,0) rectangle (1.5cm,3.0cm);
            \draw (0.35cm,0.25cm) node {$\SS_\bullet$};
            \draw[draw=recurrentcolour, fill=recurrentcolour!20, thick] (1.5cm,0) rectangle (3.0cm,3.0cm);
            \draw (2.7cm,0.25cm) node {$\SS_\circ$};
            \draw[fill=transientcolour!20, draw=transientcolour, thick] (0,1.5) rectangle (3.0cm,3.0cm);
            \draw (1.5cm,2.25cm) node {$B$};
        \end{tikzpicture}

        \caption{$B \subset \SS$}\label{fig:completeness:c}
    \end{subfigure}
    \hfill
    \begin{subfigure}[t]{0.19\textwidth}
        \centering
        \begin{tikzpicture}
            \draw[draw=recurrentcolour, fill=recurrentcolour!20, thick] (0,0) rectangle (1.5cm,3.0cm);
            \draw (0.35cm,0.25cm) node {$\SS_\bullet$};
            \draw[draw=recurrentcolour, fill=recurrentcolour!20, thick] (1.5cm,0) rectangle (3.0cm,3.0cm);
            \draw (2.7cm,0.25cm) node {$\SS_\circ$};
            \draw[fill=transientcolour!20, draw=transientcolour, thick] (1.5,1.5) rectangle (3.0cm,3.0cm);
            \draw (2.25cm,2.25cm) node {$B$};
        \end{tikzpicture}

        \caption{$B \subset \SS_\circ$}\label{fig:completeness:d}
    \end{subfigure}
    \hfill
    \begin{subfigure}[t]{0.19\textwidth}
        \centering
        \begin{tikzpicture}
            \draw[draw=recurrentcolour, fill=recurrentcolour!20, thick] (0,0) rectangle (1.5cm,3.0cm);
            \draw (0.35cm,0.25cm) node {$\SS_\bullet$};
            \draw[draw=transientcolour, fill=transientcolour!20, thick] (1.5cm,0) rectangle (3.0cm,3.0cm);
            \draw (2.7cm,0.25cm) node {$\SS_\circ$};
            \draw (2.25cm,1.5cm) node {$B$};
        \end{tikzpicture}

        \caption{$B = \SS_\circ$}\label{fig:completeness:e}
    \end{subfigure}

    \caption{Schematic illustration of the proof construction for Proposition~\ref{prop:completeness}; see Section~\ref{sec:proofs:completeness}.}
    \label{fig:completeness}
\end{figure*}
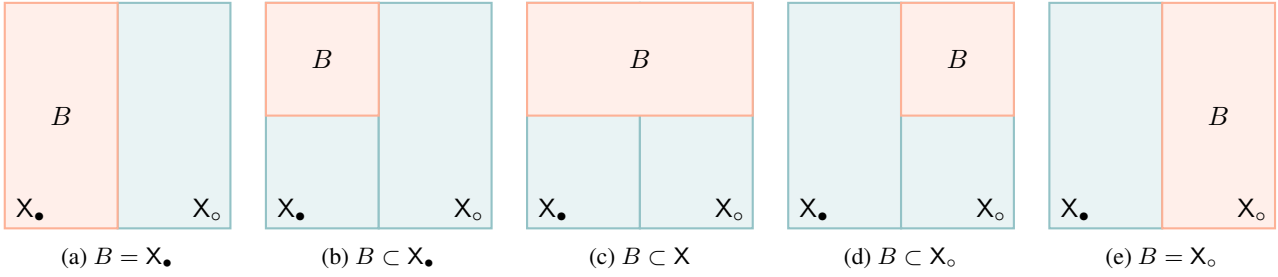

\begin{proof}
    Suppose that the candidate drift condition were defined using a set $B \subset \SS$ containing at least one transient state (i.e.\ $B \cap \SS_\circ \ne \emptyset$); see Figures~\ref{fig:completeness:c}-\ref{fig:completeness:e}. Let $x \in B\cap\SS_\circ$ denote one such state. Then, by Theorem~\ref{thm:decomp} and the definition of inessential sets, the probability that $x$ visits $\SS_\circ$ infinitely often is zero, and thus any chain $\bm{X}$ starting in state $x$ must eventually leave $\SS_\circ$; i.e.\ Corollary~\ref{corr:convergence}. Since $x\in\SS_\circ$, this implies that $\SoR{x}$ and the corresponding potential (by additivity of measures) must reduce in size for at least one state in $B$ which conflicts with \eqref{eq:rmcs}.

    Suppose instead that the candidate set $B \subset \SS_\bullet$ failed to cover the full a.i.\ subspace $\SS_\bullet$; see Figure~\ref{fig:completeness:b}. By Theorem~\ref{thm:decomp}, we know that $\SS_\bullet$ comprises a union of a.i.\ subsets, and thus any state $x \in \SS_\bullet \setminus B$ must occupy one such subset which we denote $C \in \subsof[_\bullet]{\SS}$. It follows from Lemma~\ref{lem:aiae_potential} that $\SoR{x} = C$ and thus there exists at least one successor state $x' \in C$ for which the potential is unchanged, leading to a violation of the drift condition.
\end{proof}

\subsection{Lemma~\ref{lem:dag_with_loops}: DAG with Self-Loops}
\begin{proof}
    We first appeal, as ever, to Theorem~\ref{thm:canonical} to assert that $\SS_\circ$ and $\SS_\bullet$ are disjoint. Then, for an FRMC, the transient part $\SS_\circ$ can be seen to induce a directed graph $\mathcal{G} \doteq \left(\SS_\circ, \mathcal{E}\right)$ with edge set $\mathcal{E} \doteq \left\{ \left(x, x'\right) : x\in\SS_\circ, x'\in\SoR[_1]{x}\setminus\SS_\bullet \right\}$; i.e. the transition graph excluding self-loops. We now show that $\mathcal{G}$ is acyclic.

    Let $D\!\left(x\right)$ denote the descendants of the state $x$ in the graph $\mathcal{G}$. That is, the states $x'\in\SS_\circ$ such that there exists a path from $x$ to $x'$. Note that $D\!\left(x\right) = \SoR{x} \setminus \SoR[_0]{x}$ by definition of an FRMC (see Definition~\ref{def:frmc}). Further, the drift condition \eqref{eq:rmcs} requires that, for every state $x\in\SS_\circ$, each of its descendants $x'\in D\!\left(x\right)$ has the same number or fewer of descendants than $x$ does: $\card{D\!\left(x'\right)} \leq \card{D\!\left(x\right)}$ for all $x\in\SS_\circ$ and $x'\in D\!\left(x\right)$. We also know that for all $x'\in D\!\left(x\right)$, $D\!\left(x'\right)\subseteq D\!\left(x\right)$ by construction of $\mathcal{E}$. However, if the graph $\mathcal{G}$ has a cycle, then there must exist two states $x, x'$ such that $x\in D\!\left(x'\right)$ and $x' \in D\!\left(x\right)$, which implies that $D\!\left(x\right) \subseteq D\!\left(x'\right)$ and $D\!\left(x'\right)\subseteq D\!\left(x\right)$, which further implies that $D\!\left(x\right) = D\!\left(x'\right)$, violating the definition of reductivity and thus concluding the proof.
\end{proof}

\subsection{Corollary~\ref{corr:canon_to_reduct}}
\begin{proof}
    The proof is in fact immediate from the construction used in the proof of Theorem~\ref{thm:canonical}.
\end{proof}

\subsection{Theorem~\ref{thm:rvi}: RVI}
\begin{proof}
    We begin by proving consistency of the algorithm, and then derive a bound on its sample complexity.

    \paragraph{Consistency.}
    Let us first define the sequence $A_n \doteq \argmin_{x\in\SS\setminus A_{n-1}} \PF[_\mu]{x}$ inductively for $n > 0$ with $A_0 \doteq \SS_\bullet$. Given Assumption~\ref{ass:consistency}, these sets are the same for all policies $\pi\in\Pi$. Note also that, for FRMDPs, $\mu$ is the counting measure and thus $\SoR{x}$ is the set of descendants from the state $x\in\SS$. This means that for each $n>0$, $A_n$ corresponds to the least common ancestors of the set $A_{n-1}$~\cite{ait:1989:efficient}. Assuming that $A_0 = \SS_\bullet$ is already computed, then this implies that $v\!\left(x\right)$ is known from the previous iteration for all $x\in\SoR{A_n}$ and thus all updates are consistent on their first iteration.

    \paragraph{Complexity.}
    Since $\cup_{n \geq 0} A_n = \SS$, the value update in \eqref{eq:q_decomp} is performed at most $\card{\SS_\circ}$ times. It thus remains to establish the complexity of evaluating \eqref{eq:q_decomp}. First note that, for a given pair $(x, u)$, computing $r(x, u)$, $\beta(x, u)$, $\alpha(x, u)$ and $q'_\bullet(x, u)$ can all be done in constant time when we have access to the model \emph{and} previously solved states; this holds via the consistency property of RVI. Second, observe that $q_\circ'(x, u)$ can be computed (in the worst-case) via explicit enumerate of all action-next-state tuples that could arise from the incident state $x$. This leads to a complexity of $\mathcal{O}\!\left(\card{\SS}\cdot\card{\AS}\right)$, and thus the same holds for \eqref{eq:q_decomp} altogether. Now, given Assumption~\ref{ass:level_set}, we can also compute the sets $A_n$ with complexity at most $\mathcal{O}\!\left(\card{\SS_\circ}\cdot\card{\SS}\cdot\card{\AS}\right)$. Recalling that $A_n \subseteq \SS_\circ$ for all $n > 0$, we have that the overall complexity must also be $\mathcal{O}\!\left(\card{\SS_\circ}\cdot\card{\SS}\cdot\card{\AS}\right)$ by the summation rule of asymptotic analysis. This concludes the proof.
\end{proof}

\section{Shrinking Intervals}\label{sec:shrinking_intervals}
The \shrinkingintervals{} problem used throughout the paper may be simple to analyse, but it highlights some of the key concepts of reductivity. To better understand this domain, recall that the problem is specified generally as having a state-space $\SS \doteq [0, 1]$ and constrained transition kernel $\SoR[_1]{x} \subseteq [0, x)$ for all $x\in\SS$. For simplicity, let us consider the case where $\SoR[_1]{x} = [0, x - \delta]$ for small $\delta > 0$---implying $\SoR{x} = [0, x - \delta] \cup \{x\}$---and ascribe a uniform probability distribution over said successor states; see Figure~\ref{fig:shrinking_intervals}. In this case, we can explicitly compute the transition function, $\DPF{x}{x'} = x - x'$, and it's conditional expectation:
\begin{align*}
    \mathbb{E}\!\left[\DPF{x}{X'} ~\middle\vert~ X=x\right]
        &= x - \mathbb{E}\!\left[X' ~\middle\vert~ X=x\right], \\
        &= x - \frac{x - \delta}{2}, \\
        &= \frac{x + \delta}{2}.
\end{align*}
This means that, on average, the potential reduces by factor, approximately, of $1/2$ at each step. In fact, if we take the multiplicative formulation described in Section~\ref{sec:rmcs:convergence}, then we can say something stronger as shown in Lemma~\ref{lem:shrinking} below. While this is superseded by Corollary~\ref{corr:convergence}, it is informative to consider the reasons why this is true for the \shrinkingintervals{} case.

\begin{figure}
    \centering
    \begin{tikzpicture}[xscale=6, yscale=4]
        \def\ddelta{0.075};  
        \draw[help lines, color=gray!30, dashed, step=.25] (0,0) grid (1.14,1.14);
        \draw[->, thick] (-.15,0)--(1.15,0) node[right] {$x$};
        \draw[->, thick] (0,-.15)--(0,1.15) node[above] {$\pi\!\left(\cdot ~ \middle\vert ~ x\right)$};

        \foreach \x/\xtext in {0.25/\frac{1}{4}, 0.5/\frac{1}{2}, .75/\frac{3}{4}, 1/1}
        \draw[shift={(\x,0)}] (0pt,1pt) -- (0pt,-1pt) node[below] {$\xtext$};

        \foreach \y/\ytext in {0.25/\frac{1}{4}, 0.5/\frac{1}{2}, .75/\frac{3}{4}, 1/1}
        \draw[shift={(0,\y)}] (0.5pt,0pt) -- (-0.5pt,0pt) node[left] {$\ytext$};

        \node at (-0.05, -0.08) {0};
        \node[circle, draw=black, fill=black, inner sep=0pt, minimum size=3] () at (0, 0) {};

        \foreach \x in {0.25, 0.5, 0.75, 1}
        \draw[line width=2mm, color=black] (\x, 0) -- (\x, \x - \ddelta) node[] {};

        \foreach \x in {0.25, 0.5, 0.75, 1}
        \draw [decorate, decoration={brace, amplitude=2pt, raise=4pt}, yshift=0pt] (\x, \x) -- (\x, \x - \ddelta) node [black, midway, xshift=0.5cm] {\footnotesize $\delta$};

    \end{tikzpicture}

    \caption{Illustration of the \shrinkingintervals{} problem for $\SoR[_1]{x} = [0, x-\delta]$ and $\delta > 0$. The $x$-axis corresponds to the state-value, and the $y$-axis the support of feasible policies in the MDP formulation.}\label{fig:shrinking_intervals}
\end{figure}
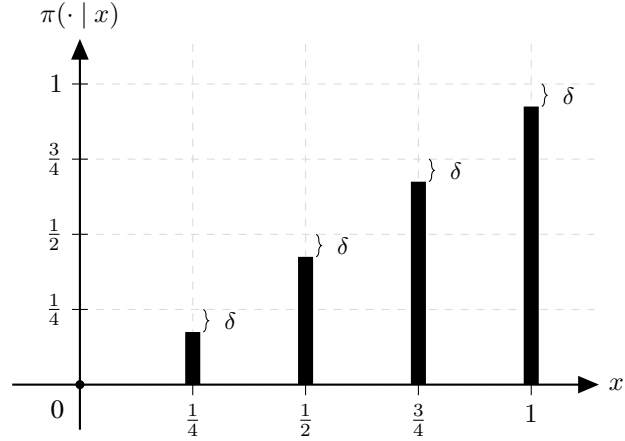

\begin{lemma}\label{lem:shrinking}
    Consider the chain $X_n \doteq \prod_{k=1}^n Y_k$ over independent and identically distributed continuous random variables $Y_n$. If $Y_1$ (and thus all $Y_k$) has support on a subset of the open interval $(-1, 1)$, then $\{X_n\}$ converges a.s.\ to zero.
\end{lemma}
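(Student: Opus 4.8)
The plan is to linearise the product by passing to logarithms and then invoke the strong law of large numbers (SLLN). Since $Y_k$ is continuous we have $\p{Y_k = 0} = 0$, so $\abs{Y_k} > 0$ almost surely and $\log\abs{Y_k}$ is a well-defined extended real; moreover $X_n \to 0$ precisely when $\abs{X_n}\to 0$. It therefore suffices to show that
\begin{displaymath}
    S_n \doteq \log\abs{X_n} = \sum_{k=1}^n \log\abs{Y_k} \longrightarrow -\infty \quad\text{a.s.}
\end{displaymath}
The summands $L_k \doteq \log\abs{Y_k}$ are i.i.d., and the hypothesis that the support of $Y_1$ lies in $(-1, 1)$ gives $\abs{Y_k} < 1$ a.s., hence $L_k < 0$ a.s. In particular the positive part $L_k^+$ vanishes, so the mean $\mu \doteq \mathbb{E}[L_1]$ is well-defined and strictly negative, $\mu \in [-\infty, 0)$.

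If $\mu$ is finite, I would apply Kolmogorov's SLLN directly: $S_n/n \to \mu < 0$ almost surely, whence $S_n = n\cdot(S_n/n) \to -\infty$, and therefore $\abs{X_n}\to 0$, giving the claim. This is the clean case and requires no further work.

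The main obstacle is the possibility that $\mu = -\infty$, where the classical SLLN does not apply verbatim. I would resolve this by truncation: for each $M>0$ set $L_k^{(M)} \doteq \max\{L_k, -M\}$, which is bounded and hence integrable, so the SLLN yields $\tfrac1n\sum_{k=1}^n L_k^{(M)} \to \mathbb{E}[L_1^{(M)}]$ a.s. Since $L_k \le L_k^{(M)}$ pointwise,
\begin{displaymath}
    \limsup_{n\to\infty} \frac{S_n}{n} \leq \mathbb{E}\!\left[L_1^{(M)}\right] \quad\text{a.s.},
\end{displaymath}
while monotone convergence gives $\mathbb{E}[L_1^{(M)}] \downarrow \mu = -\infty$ as $M\to\infty$. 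Intersecting over a countable sequence $M\to\infty$ forces $S_n/n \to -\infty$, and hence $S_n \to -\infty$, almost surely.

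An equivalent route that sidesteps the infinite-mean case entirely, and which I would likely present for robustness, is as follows. Since $\p{\abs{Y_1} < 1} = 1$, continuity of measure from below furnishes a constant $c\in(0,1)$ with $p \doteq \p{\abs{Y_1}\le c} > 0$. Writing $N_n \doteq \sum_{k=1}^n \mathbf{1}\{\abs{Y_k}\le c\}$ for the bounded i.i.d.\ Bernoulli counts, the ordinary SLLN gives $N_n/n \to p > 0$ a.s., so $N_n\to\infty$. Bounding each factor by $1$ off the event and by $c$ on it yields $\abs{X_n} \le c^{N_n} \to 0$ almost surely. Either argument concludes that $X_n \to 0$ a.s.
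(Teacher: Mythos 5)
Your proof is correct, but it takes a genuinely different route from the paper's. The paper argues via the first Borel--Cantelli lemma: for fixed $\varepsilon\in(0,1)$, the event $\{\abs{X_n}>\varepsilon\}$ can occur only if \emph{no} factor $\abs{Y_k}$, $k\le n$, has fallen below $\varepsilon$, whence $\p{\abs{X_n}>\varepsilon}\le\left[1-F(\varepsilon)\right]^n$ with $F$ the CDF of $\abs{Y_1}$; summability of this geometric series then forces $\abs{X_n}>\varepsilon$ only finitely often, for each $\varepsilon$. You instead linearise the product through logarithms and invoke the SLLN, adding a truncation step to handle $\mathbb{E}[\log\abs{Y_1}]=-\infty$, where the classical SLLN does not apply directly; your fallback argument counts small factors ($\abs{X_n}\le c^{N_n}$ with $N_n/n\to p>0$) and applies the SLLN only to bounded indicators. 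What each buys: your SLLN route yields quantitative information---the a.s.\ exponential decay rate $\tfrac1n\log\abs{X_n}\to\mu$---that the paper's qualitative argument does not, at the cost of a case analysis; the paper's argument is shorter and needs only elementary tools. Notably, both of your arguments are also more robust on one point: the paper asserts $F(\varepsilon)>0$ ``since $\varepsilon>0$ by construction,'' which is false when the support of $\abs{Y_1}$ is bounded away from zero (e.g.\ $Y_1$ uniform on $(1/2,3/4)$), making the geometric bound vacuous for small $\varepsilon$. Your log/SLLN argument never needs such a constant, and your counting argument repairs exactly this defect by using continuity of measure from below to produce some $c\in(0,1)$ with $\p{\abs{Y_1}\le c}>0$ and then letting $c^{N_n}\to 0$ do the work, rather than requiring positive mass below every threshold.
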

\begin{proof}
    First note that $\abs{X_n} \leq \varepsilon$ if $\abs{Y_k} \leq \varepsilon$ for at least one $1 \leq k \leq n$ with $0 < \varepsilon < 1$. Denoting the latter set of events by $A$, we have that
    \begin{displaymath}
        \p{\abs{X_n} \leq \varepsilon} \geq \p{A} \implies \p{\abs{X_n} > \varepsilon} \leq \p{\lnot A}.
    \end{displaymath}
    Since $\bm{Y}$ is a sequence of independent random variables,
    \begin{displaymath}
        \p{\lnot A} = \prod_{k=1}^n \left[1 - F\!\left(\varepsilon\right)\right] = \left[1 - F\!\left(\varepsilon\right)\right]^n
    \end{displaymath}
    where $F : [0, 1) \to [0, 1]$ denotes the CDF of $\abs{Y_1}$. This implies that the following series is convergent:
    \begin{displaymath}
        \sum_{n=1}^\infty \p{\abs{X_n} > \varepsilon} \leq \sum_{n=1}^\infty \left[1 - F\!\left(\varepsilon\right)\right]^n < \infty,
    \end{displaymath}
    since $\varepsilon > 0$ by construction and thus $F\!\left(\varepsilon\right) > 0$. It follows from the Borel-Cantelli lemma that the events $\{\abs{X_n} > \varepsilon\}_{n > 0}$ occur finitely often which is a sufficient condition for a.s.\ convergence (see Theorem~4.1.3 of \citet{bremaud:2020:probability}), concluding the proof.
\end{proof}

\clearpage
\begin{figure}
    \centering

    \begin{tikzpicture}[scale=.6,every node/.style={minimum size=1cm},on grid]
        \def\CircleSize{27};
        \def\HeightStep{175};
        \def\E{2.2};

        \begin{scope}[
            yshift=0,every node/.append style={
            yslant=0.5,xslant=-1},yslant=0.5,xslant=-1
            ]
            \def\CenterShift{\E * 2};

            \fill[white,fill opacity=0.9] (-\E/2, -2*\E-\E/2) rectangle (4*\E, 1*\E+\E/2);
            \draw[black, dashed] (-\E/2, -2*\E-\E/2) rectangle (3*\E+\E/2, -\E/2);

            \node[circle, draw=black, inner sep=0pt, minimum size=\CircleSize, opacity=.2] (31a) at (0, 1 * \E - \CenterShift) {\footnotesize 1, $\underline{z}$};
            \node[circle, draw=black, inner sep=0pt, minimum size=\CircleSize, opacity=.2] (31a1) at (1 * \E, 1 * \E - \CenterShift) {\footnotesize 1, $\underline{z}$+1};
            \node[] () at (2 * \E, 1 * \E - \CenterShift) {\footnotesize ...};
            \node[circle, draw=black, inner sep=0pt, minimum size=\CircleSize, opacity=.2] (31b) at (3 * \E, 1 * \E - \CenterShift) {\footnotesize 1, $\overline{z}$};

            \node[circle, draw=black, inner sep=0pt, minimum size=\CircleSize, ultra thick] (30a) at (0, 0 * \E - \CenterShift) {\footnotesize 0, $\underline{z}$};
            \node[circle, draw=black, inner sep=0pt, minimum size=\CircleSize, ultra thick, fill=lightgray] (30a1) at (1 * \E, 0 * \E - \CenterShift) {\footnotesize 0, $\underline{z}$+1};
            \node[] () at (2 * \E, 0 * \E - \CenterShift) {\footnotesize ...};
            \node[circle, draw=black, inner sep=0pt, minimum size=\CircleSize, ultra thick] (30b) at (3 * \E, 0 * \E - \CenterShift) {\footnotesize 0, $\overline{z}$};
        \end{scope}

        \begin{scope}[
            yshift=\HeightStep,every node/.append style={
            yslant=0.5,xslant=-1},yslant=0.5,xslant=-1
            ]
            \def\CenterShift{\E * 2};

            \fill[white,fill opacity=0.9] (-\E/2, -2*\E-\E/2) rectangle (4*\E, 2*\E+\E/2);
            \draw[black, dashed] (-\E/2, -2*\E-\E/2) rectangle (3*\E+\E/2, \E/2);

            \node[circle, draw=black, inner sep=0pt, minimum size=\CircleSize, opacity=.2] (32a) at (0, 2 * \E - \CenterShift) {\footnotesize 2, $\underline{z}$};
            \node[circle, draw=black, inner sep=0pt, minimum size=\CircleSize, opacity=.2] (321) at (1 * \E, 2 * \E - \CenterShift) {\footnotesize 2, $\underline{z}$+1};
            \node[] () at (2 * \E, 2 * \E - \CenterShift) {\footnotesize ...};
            \node[circle, draw=black, inner sep=0pt, minimum size=\CircleSize, opacity=.2] (32b) at (3 * \E, 2 * \E - \CenterShift) {\footnotesize 2, $\overline{z}$};

            \node[circle, draw=black, inner sep=0pt, minimum size=\CircleSize,fill=lightgray] (31a) at (0, 1 * \E - \CenterShift) {\footnotesize 1, $\underline{z}$};
            \node[circle, draw=black, inner sep=0pt, minimum size=\CircleSize] (31a1) at (1 * \E, 1 * \E - \CenterShift) {\footnotesize 1, $\underline{z}$+1};
            \node[] () at (2 * \E, 1 * \E - \CenterShift) {\footnotesize ...};
            \node[circle, draw=black, inner sep=0pt, minimum size=\CircleSize] (31b) at (3 * \E, 1 * \E - \CenterShift) {\footnotesize 1, $\overline{z}$};

            \node[circle, draw=black, inner sep=0pt, minimum size=\CircleSize, ultra thick] (30a) at (0, 0 * \E - \CenterShift) {\footnotesize 0, $\underline{z}$};
            \node[circle, draw=black, inner sep=0pt, minimum size=\CircleSize, ultra thick] (30a1) at (1 * \E, 0 * \E - \CenterShift) {\footnotesize 0, $\underline{z}$+1};
            \node[] () at (2 * \E, 0 * \E - \CenterShift) {\footnotesize ...};
            \node[circle, draw=black, inner sep=0pt, minimum size=\CircleSize, ultra thick] (30b) at (3 * \E, 0 * \E - \CenterShift) {\footnotesize 0, $\overline{z}$};

        \end{scope}

        \begin{scope}[
            yshift=2*\HeightStep,every node/.append style={
            yslant=0.5,xslant=-1},yslant=0.5,xslant=-1
            ]
            \def\CenterShift{\E * 2};
            \fill[white,fill opacity=0.9] (-\E/2, -2*\E-\E/2) rectangle (4*\E, 2*\E+\E/2);
            \draw[black,  dashed] (-\E/2, -2*\E-\E/2) rectangle (3*\E+\E/2, \E+\E/2);


            \node[circle, draw=black, inner sep=0pt, minimum size=\CircleSize] (43a) at (0, 3 * \E - \CenterShift) {\footnotesize 3, $\underline{z}$};
            \node[circle, draw=black, inner sep=0pt, minimum size=\CircleSize] (43a1) at (1 * \E, 3 * \E - \CenterShift) {\footnotesize 3, $\underline{z}$+1};
            \node[] () at (2 * \E, 3 * \E - \CenterShift) {\footnotesize ...};
            \node[circle, draw=black, inner sep=0pt, minimum size=\CircleSize] (43b) at (3 * \E, 3 * \E - \CenterShift) {\footnotesize 3, $\overline{z}$};

            \node[circle, draw=black, inner sep=0pt, minimum size=\CircleSize] (42a) at (0, 2 * \E - \CenterShift) {\footnotesize 2, $\underline{z}$};
            \node[circle, draw=black, inner sep=0pt, minimum size=\CircleSize,fill=lightgray] (421) at (1 * \E, 2 * \E - \CenterShift) {\footnotesize 2, $\underline{z}$+1};
            \node[] () at (2 * \E, 2 * \E - \CenterShift) {\footnotesize ...};
            \node[circle, draw=black, inner sep=0pt, minimum size=\CircleSize] (42b) at (3 * \E, 2 * \E - \CenterShift) {\footnotesize 2, $\overline{z}$};

            \node[circle, draw=black, inner sep=0pt, minimum size=\CircleSize] (41a) at (0, 1 * \E - \CenterShift) {\footnotesize 1, $\underline{z}$};
            \node[circle, draw=black, inner sep=0pt, minimum size=\CircleSize] (41a1) at (1 * \E, 1 * \E - \CenterShift) {\footnotesize 1, $\underline{z}$+1};
            \node[] () at (2 * \E, 1 * \E - \CenterShift) {\footnotesize ...};
            \node[circle, draw=black, inner sep=0pt, minimum size=\CircleSize] (41b) at (3 * \E, 1 * \E - \CenterShift) {\footnotesize 1, $\overline{z}$};

            \node[circle, draw=black, inner sep=0pt, minimum size=\CircleSize, ultra thick] (40a) at (0, 0 * \E - \CenterShift) {\footnotesize 0, $\underline{z}$};
            \node[circle, draw=black, inner sep=0pt, minimum size=\CircleSize, ultra thick] (40a1) at (1 * \E, 0 * \E - \CenterShift) {\footnotesize 0, $\underline{z}$+1};
            \node[] () at (2 * \E, 0 * \E - \CenterShift) {\footnotesize ...};
            \node[circle, draw=black, inner sep=0pt, minimum size=\CircleSize, ultra thick] (40b) at (3 * \E, 0 * \E - \CenterShift) {\footnotesize 0, $\overline{z}$};

        \end{scope}

        \begin{scope}[
            yshift=3*\HeightStep,every node/.append style={yslant=0.5, xslant=-1}, yslant=0.5, xslant=-1]]

            \draw[black,dashed] (-\E / 2,2 * \E + -\E / 2) rectangle (3 * \E + \E / 2,2 * \E + \E/2);
            \node[circle, draw=black, inner sep=0pt, minimum size=\CircleSize] (5a) at (0, 2 * \E) {\footnotesize 4, $\underline{z}$};
            \node[circle, draw=black, inner sep=0pt, minimum size=\CircleSize,fill=lightgray] (54a1) at (1 * \E, 2 * \E) {\footnotesize 4, $\underline{z} + 1$};
            \node[] () at (2 * \E, 2 * \E) {\footnotesize ...};
            \node[circle, draw=black, inner sep=0pt, minimum size=\CircleSize] (54b) at (3 * \E, 2 * \E) {\footnotesize 4, $\overline{z}$
            };
        \end{scope}

    \end{tikzpicture}

    \caption{Depicts the decision process associated with an instance of optimal liquidation problem, assuming $\overline{q} = 4$. The dashed squares contain all the states that could be potentially reachable at that decision step (notice how this space necessarily reduces with each decision step). Nodes filled in gray depict one possible execution of the problem with $Q_n = \{4, 2, 1, 0\}$ and $Z_n = \{\underline{z} + 1, \underline{z} + 1, \underline{z}, \underline{z} + 1\}$. Nodes slightly faded out correspond to nodes that cannot be reached under this execution.}
    \label{fig:optimal_exec}
\end{figure}
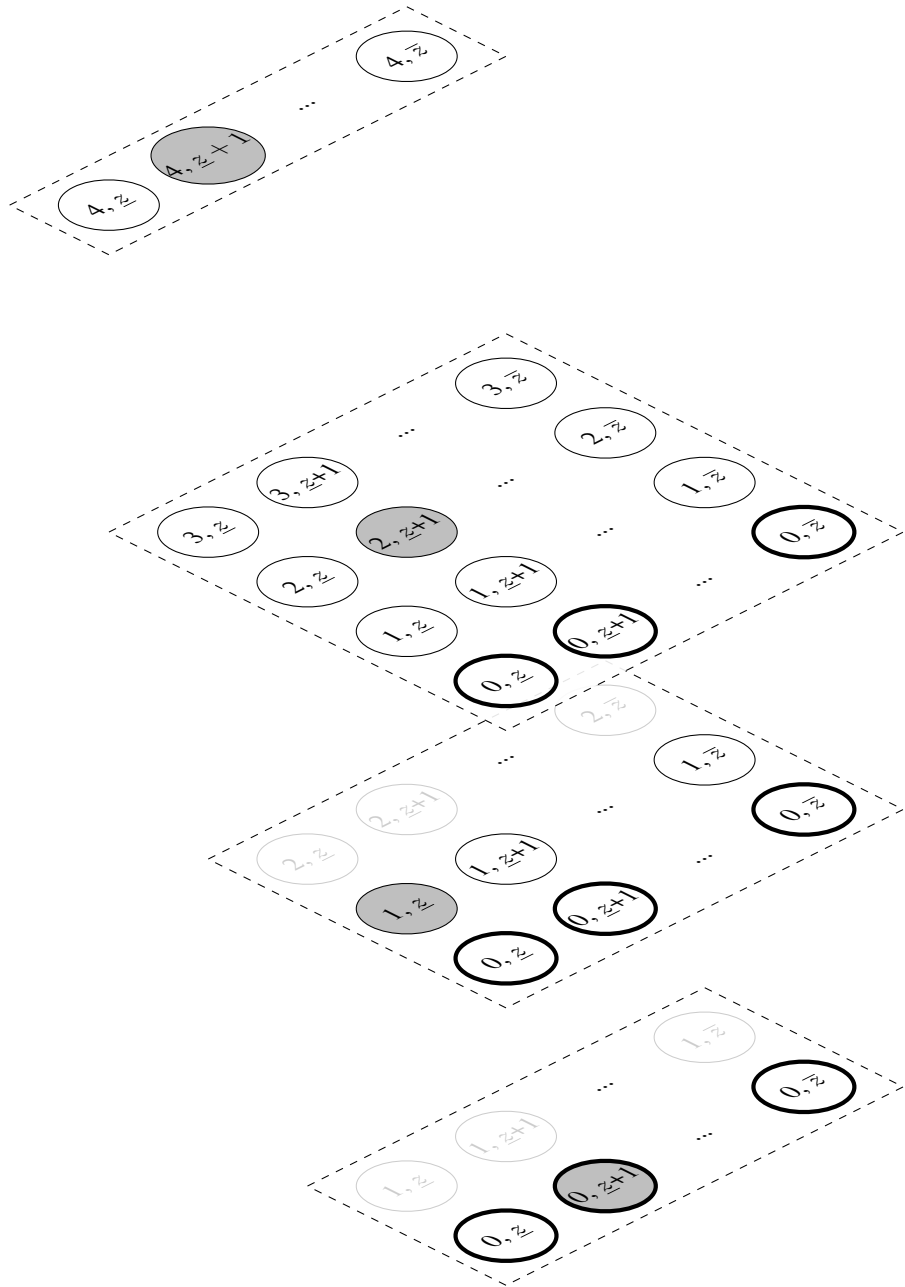

\end{document}